  \renewcommand{\footnoterule}{%
  \kern -3pt
  \hrule width \textwidth height 1pt
  \kern 2pt
}
\newcommand{\reals}{\mathbb{R}}
\newcommand{\naturals}{\mathbb{N}}
\newcommand{\Expect}{\mathbb{E}}
\newcommand{\prob}[1]{\mathbb{P}\left[#1\right]}
\newcommand{\inner}[2]{\langle {#1}, {#2}  \rangle}
\newcommand{\calF}{\mathcal{F}}
\newcommand{\calN}{\mathcal{N}}
\newcommand{\calS}{\mathcal{S}}
\newcommand{\norm}[1]{\left \|#1\right \|}
\newcommand{\define}{\triangleq}
\newcommand{\pth}[1]{\left( #1 \right)}
\newcommand{\sth}[1]{\left\{ #1 \right\}}
\newcommand{\ie}{i.e.\xspace}
\newtheorem{theorem}{Theorem}
\theoremstyle{definition}
\newtheorem{example}{Example}
\newtheorem{propo}{Proposition}[section]
\newtheorem{lemma}[propo]{Lemma}
\newtheorem{assumption}{Assumption}
\newtheorem{remark}{Remark}
\begin{document}






\title{Learning One-hidden-layer Neural Networks\\
 under General Input Distributions}

\author{
Weihao Gao\thanks{Equal Contribution},  Ashok Vardhan Makkuva$^{*}$, Sewoong Oh, and Pramod Viswanath\thanks{Emails: \texttt{\{wgao9,makkuva2,swoh,pramodv\}@illinois.edu}.}\\
	University of Illinois at Urbana-Champaign
}

\date{}

\maketitle

\begin{abstract}

Significant advances have been made recently on training neural networks,
where the main challenge is in solving an
optimization problem with abundant critical points.
However, existing approaches to address this issue crucially rely
on a restrictive assumption: the training data is drawn from a Gaussian distribution.
In this paper, we provide a novel unified framework to
design loss functions with desirable landscape properties
for a wide range of general  input distributions. On these loss
functions, remarkably, stochastic gradient descent
theoretically recovers the true parameters with \emph{global} initializations
and empirically outperforms the existing approaches. Our loss function
design bridges the notion of score functions with the topic
of neural network optimization.
Central to our approach is
the task of estimating the score function from samples,
which is of basic and independent interest to theoretical statistics.
Traditional estimation methods (example: kernel based) fail right at the outset; we bring statistical methods of local likelihood to design a novel
estimator of  score functions, that provably adapts to the local geometry of the unknown  density.

%
\end{abstract}

\section{Introduction}
\label{sec:intro}


Neural networks have made significant impacts over the past decade, 
thanks to their successful applications across multiple domains including 
computer vision, natural language processing, and robotics. 
This success partly owes to the mysterious phenomenon that 
(stochastic) gradient method applied to 
highly non-convex loss functions converges to a model parameter that achieves high test accuracy. 
We are in a dire need of theoretical understanding of such phenomenon, 
in order to guide the design of next generation neural networks and training methods. 
Significant recent progresses have been made, 
by asking a simpler question: 
can we efficiently learn a neural network model, 
when there is a ground truth neural network that generated the data? 

Suppose the data $(x,y)$ is generated by sampling $x$ from 
an unknown distribution $f_X(x)$ and $y$ is generated by 
passing $x$ through an unknown neural network and adding some simple noise. 
Even if we train neural networks on this ``teacher network'', it is known to be a hard problem
without further assumptions \citep{brutzkus2017globally}.   
Significant effort has been on 
designing new approaches to learn 
simple neural networks (such as one-hidden-layer neural network) 
on data from simple distributions (such as Gaussian) \citep{tian2017analytical,ge2017learning}. 
This is followed by analyses on increasingly more complex architectures \citep{brutzkus2017globally,li2017convergence}.
However, the analysis techniques critically depend on the Gaussian input assumption, 
and further the proposed algorithms are tailored specifically to Gaussian inputs. 
In this paper, we provide a  unified approach to design loss functions that 
provably learn the true model  for a wide range of input distributions with smooth densities. 

We consider a scenario where the data is generated from a 
one-hidden-layer neural network 
\begin{eqnarray}
	y \; =\; \sum_{i=1}^k w_i^\ast \, g(\inner{a_i^\ast}{x}) \,+\, \eta \;.
	\label{eq:model0}
\end{eqnarray}
where the true parameters are $w_i^\ast \in \reals$ and  $a_i^\ast \in \reals^d$, 
and  $\eta$ is a zero-mean noise independent of $x$, with some non-linear activation function $g:\reals\to\reals$. 
It has been widely known that
first order methods on the $\ell_2$-loss get stuck in bad local minima,
even for this simple one-hidden-layer neural networks \citep{livni2014computational}.
If the input $x$ is coming from a Gaussian distribution,  
\cite{ge2017learning} proposes a new loss function $G(\cdot)$ with a carefully designed landscape such that 
Stochastic Gradient Descent (SGD) provably converges to the true parameters. 
However, the proposed novel loss function is specifically designed for Gaussian inputs, 
and gets stuck at bad local minima when applied to general non-Gaussian distributions. 
We showcase this in \prettyref{fig:exponential}.
Designing the optimization landscape for
general input distributions is a
practically important and technically challenging problem,
 as acknowledged in \cite{ge2017learning}
 and many existing works in the literature~\citep{brutzkus2017globally,tian2017analytical,li2017convergence}.

Our goal is to strictly generalize the approach of \cite{ge2017learning} and 
construct a  loss function $L(\cdot)$ with a good landscape such that SGD 
recovers the true parameters with \emph{global} initializations.
The main challenge is in  estimating the {\em score function} defined as 
a functional of the probability density function $f(x)$ of the input data $x$: 
\begin{eqnarray}
    \mathcal{S}_m(x) \;\; \define \;\; \frac{\nabla^{(m)} f_X(x)}{f_X(x)} \;,
    \label{eq:defscore}
\end{eqnarray}
where $\nabla^{(m)} f_X(x)$ denotes the $m$-th order derivative 
for an $m\in{\mathbb Z}$, 
which plays a crucial role in the landscape design. 
We need to evaluate this score function at sample points, which 
 is extremely challenging as  it involves the {\em higher order  derivatives} of a pdf that we do not know.  
Standard non-parametric density estimation methods 
such as the Kernel Density Estimators (KDE) \citep{fukunaga1975estimation} and 
$k$-Nearest Neighbor methods ($k$NN) all fail to provide a consistent estimator, as they are tailored for density estimation.  
Existing heuristics do not have even consistency guarantees, which include 
score matching based methods~\citep{hyvarinen2005estimation,swersky2011autoencoders}, 
and de-noising auto-encoder (DAE) based algorithms \citep{janzamin2015feast}.

In this paper, we first address this fundamental question of
how to estimate the score functions  from samples
in a principled manner.
We introduce a novel 
approach to adaptively capture the
local geometry of the pdf to
design a consistent estimator for score functions.
To achieve this, we bring ideas from {\it local likelihood } methods ~\citep{loader1996local,hjort1996locally} from statistics to the context of score function estimation and also prove the convergence rate of our estimator (LLSFE), which is of independent mathematical interest.
We further introduce a new loss function for training one-hidden-layer neural networks, 
that builds upon the estimated score functions.  
We show that this provably has the desired landscape for general input distributions.

In summary, our main contributions are:

\begin{itemize}
    \item {\bf Score function estimation.} In this paper, we provide the first consistent estimator for score functions (and hence the gradients of $L(\cdot)$), 
    which play crucial roles in several recent model parameter learning problems \citep{hyvarinen2005estimation,swersky2011autoencoders,janzamin2015feast}.  
    Our provably consistent estimation of score functions, LLSFE, from samples, with local geometry adaptations, is   of independent mathematical interest.
    
    \item {\bf Optimization landscape for general distributions.}
    For a large class of input distributions, with an appropriate score transformation for the input and  appropriate tensor projection, we design a  loss function $L(\cdot)$ for one-hidden-layer neural network with good landscape properties. In particular, our result is a strict generalization of \citep{ge2017learning} which was restricted to Gaussian inputs, in both mathematical and abstract view-points.
%

%
%

\end{itemize}


\noindent {\bf Related work.} Several recent works have provided provable algorithms for training neural networks \citep{liang2018understanding,choromanska2015loss,soudry2017exponentially,goel2017learning,freeman2016topology,nguyen2017loss,boob2017theoretical}. \citep{arora2014provable} is an early work on provable learning guarantees on deep generative models for sparse weights. \cite{brutzkus2017globally,tian2017analytical} analyze one-hidden-layer neural network with Gaussian input and hidden variables with disjoint supports. \citep{li2017convergence} analyzed the convergence of one-hidden layer neural network with Gaussian input when the true weights are close to identity. \citep{andoni2014learning}, \citep{panigrahy2018convergence}, \citep{du2018power} and~\citep{soltanolkotabi2017theoretical} studied the optimization landscape of neural networks for some specific activation functions.

Tensor methods have been used to build provable algorithms for training neural networks \citep{janzamin2015beating,zhong2017recovery}. Our work is built upon~\citep{ge2017learning}, which uses a  fourth-order tensor based objective function and show good landscape properties. Most of the aforementioned works requires specific assumptions on the input distribution (example: Gaussian), while we only require generic smoothness of the underlying (unknown) density. In a recent work, \cite{ge2018learning} provided a learning algorithm using the method of moments for symmetric input distributions. However their techniques are very specific to ReLU activation and do not generalize to general activation functions which we can handle. \cite{zhang2018learning} show that gradient descent on empirical loss function based on least squares can recover the true parameters provided the parameters have a good initialization; in contrast, we use \emph{global} initializations for our algorithm.
\\

{\bf Notations.} We use $\mathcal{T}(x_1,  \dots, x_m)$ to denote the inner product for an $m$-th order tensor $\mathcal{T}$ and vectors $x_1, \dots, x_m$. We use $x_1 \otimes x_2 \otimes \dots \otimes x_m$ to denote outer product of vectors/matrices/tensors. $x^{\otimes j} = x \otimes \dots \otimes x$ denotes the $j$-th order tensor power of $x$ and $x^{\otimes 0} = 1$. 

$\|\mathcal{T}\|_{{\rm sp}} = \max_{\|u_i\|_2 \leq 1} \mathcal{T}(u_1, u_2, \dots, u_m)$ and $\|\mathcal{T}\|_{\mathcal{F}} = \sqrt{ \sum_{i_1, \dots, i_m} (\mathcal{T}_{(i_1, \dots, i_m)})^2}$ denotes the spectral norm and Frobenius norm of matrix and high-order tensor. ${\rm sym}(\mathcal{T})$ denotes the symmtrify operator of a tensor $\mathcal{T}$ as ${\rm sym}(\mathcal{T})_{(i_1, \dots, i_m)} = \frac{1}{m!} \sum_{(j_1, \dots, j_m) \in \pi(i_1, \dots, i_m)} \mathcal{T}_{(j_1, \dots, j_m)}$.

\section{Score Function Estimation}
\label{sec:estimation}

In this section, we introduce a new approach for estimating score functions defined in~\eqref{eq:defscore} 
from i.i.d.~samples from a distribution. 
As the score functions involve higher order derivatives of the pdf, 
it is critical to capture the rate of {\em changes} in the pdf.
Further, we aim to apply it to data coming from a broad range of distributions.  
Such sharp estimates for such broad class of distributions can only be achieved by 
combining the strengths of two popular approaches in density estimation: 
simple parametric density estimators and complex non-parametric density estimators. 
We bridge this gap by borrowing the techniques from 
Local Likelihood Density Estimators (LLDE) and bring them to a new light 
in order to provide the first consistent score function estimators. 

\subsection{Local Likelihood Density Estimator (LLDE)}

How do we estimate the normalized derivatives of the density? We address this question in a principled manner utilizing the notion {\it local likelihood density estimation} (LLDE) 
from non-parametric methods~\citep{loader1996local,hjort1996locally}. 
LLDE is originally designed for estimating density for distributions with complicated local geometry, 
and can be further applied to estimate functionals of density such as information entropy~\citep{gao2016breaking}. 
Inspired by the fact that LLDEs capture the local geometry of the pdf, 
we build upon the LLDE estimators to design a new estimator of the 
higher order derivatives, which is the main bottleneck in score function estimation. 

The local likelihood density estimator is 
specified by a nonnegative function $K: \mathbb{R}^d \to \mathbb{R}$ (also called a Kernel function), 
a degree $p \in \mathbb{Z}^+$ of the polynomial approximation, 
and a bandwidth $h \in \mathbb{R}^+$. 
It is the solution of a maximization of the local log-likelihood function:
\begin{eqnarray}
    \mathcal{L}_x(f) &=& \sum_{i=1}^n K\Big(\frac{X_i - x}{h} \Big) \log f(X_i)  - n \int K\Big( \frac{u-x}{h} \Big) f(u) du \;.
\end{eqnarray}
For each $x$, we maximize this function over a parametric family  of functions $f(\cdot)$, 
using  the following local polynomial approximation of $\log f(x)$: 
\begin{eqnarray}
\log f(x) &=& a_0 + a_1^T(u-x) + \frac{1}{2} (u-x)^T A_2 (u-x) + \dots + \frac{1}{p!} \mathcal{A}_p (u-x, \dots, u-x) \;,
\end{eqnarray}
parameterized by $a = (a_0, a_1, A_2, \dots, \mathcal{A}_p) \in \mathbb{R} \times \mathbb{R}^d \times \mathbb{R}^{d^2} \times \dots \times \mathbb{R}^{d^p}$. 
The local likelihood density estimate
(LLDE) at point $x$ is defined as $f(x)=e^{\widehat{a}_0}$, where 
$\widehat{a} = (\widehat{a}_0, \widehat{a}_1, \widehat{A}_2, \dots, \widehat{\mathcal{A}}_p)$
is the maximizer around a point $x$: $\widehat{a} \in \arg\max_{f} {\cal L}_x(f) $. 
The optimization problem can be solved by setting the derivatives 
$\partial \mathcal{L}_x(p)/\partial \mathcal{A}_j = 0$ for $j \in \{0, \dots, p\}$. 
The optimal solution 
$\widehat{a}$ can be obtained from solving the following equations,
\begin{eqnarray}
&&\int_{\mathbb{R}^d} \exp\{a_0 + a_1^T(u-x) + \dots + \frac{1}{p!} \mathcal{A}_p (u-x)^{\otimes p}\}  (\frac{u-x}{h})^{\otimes j} K(\frac{u-x}{h}) du \,\notag\\
&=& \frac{1}{n} \sum_{i=1}^n (\frac{X_i-x}{h})^{\otimes j} K(\frac{X_i-x}{h}) \;.
\label{eq:LLDE}
\end{eqnarray}

We build upon this idea to first introduce the score function estimator, 
and focus on the statistical aspect of this estimator.  
We discuss the computational aspect of finding the solution to this optimization in Section~\ref{sec:computational}.

\subsection{From LLDE to local likelihood score function estimator (LLSFE)}

We build upon the techniques from LLDE to design our local likelihood score function estimator (LLSFE). 
Notice that the score function $\mathcal{S}_m(x)$ satisfies the following recursive formula from \citep{janzamin2014score},
\begin{eqnarray}
    \mathcal{S}_m(x) &=& -\mathcal{S}_{m-1}(x) \otimes \nabla_x \log f(x) - \nabla_x \mathcal{S}_{m-1}(x) ,\notag
\end{eqnarray}
and $\mathcal{S}_1(x) = -\nabla \log f(x)$. 
This recursion reveals us that the score function can be represented as a polynomial function of 
the gradients of log-density $(g_1(x), G_2(x), \dots, \mathcal{G}_m(x)) = \\
	(\nabla \log f(x), \nabla^{(2)} \log f(x), \dots, \nabla^{(m)} \log f(x))$. For example, the polynomial for $\mathcal{S}_2(x)$ and $\mathcal{S}_4(x)$  are given below:
\begin{eqnarray}
\mathcal{S}_2(x) &=& g_1(x) \otimes g_1(x) + G_2(x) \,\\
\mathcal{S}_4(x) &=& g_1(x) \otimes g_1(x) \otimes g_1(x) \otimes g_1(x) + 6 \,{\rm sym} (G_2(x) \otimes g_1(x) \otimes g_1(x)) \,\notag\\
&&+\, 3 \,(G_2(x) \otimes G_2(x)) + 4 \,{\rm sym} (\mathcal{G}_3(x) \otimes g_1(x)) + \mathcal{G}_4(x)
\end{eqnarray}
More generally, the $m$-th order score function can be represented as:
\begin{eqnarray}
\mathcal{S}_m(x) &=& \sum_{\lambda \in \Lambda_m} (-1)^m  c_m(\lambda)\, {\rm sym}(\bigotimes_{j \in \lambda} \mathcal{G}_j)  \;,
\label{eq:S_poly}
\end{eqnarray}

where $\Lambda_m$ denotes the set of partitions of integer $m$ and $c_m(\lambda)$ is a positive constant depends on $m$ and the partition, for example, $\Lambda_4 = \{\{1,1,1,1\},\{2,1,1\},\{2,2\},\{3,1\},\{4\}\}$. Given the polynomial representation of a score function, the LLSFE is given by
\begin{eqnarray}
\widehat{\mathcal{S}^{(p)}_m}(x) & \triangleq & \sum_{\lambda \in \Lambda_m} (-1)^m  c_m(\lambda)\, {\rm sym}(\bigotimes_{j \in \lambda} \widehat{\mathcal{A}}_j^{(p)}) \;.
\label{eq:hat_S_poly}
\end{eqnarray}
where $\mathcal{A}_j^{(p)}$ is the LLDE estimator of $\mathcal{G}_j$ by $p$-degree polynomial approximation.

\subsection{Convergence rate of LLSFE}

As LLDE captures the local geometry of the pdf, 
LLSFE inherits this property and is able to consistently estimate the derivatives. 
This is made precise in the following theorem, where 
we provide an upper bound of the spectral norm error of the estimated $m$-th order score function. 
First, we formally state our assumptions.
\begin{assumption}
    \label{assum_1}
    \begin{enumerate}
        \item[(a)] The degree of polynomial $p$ is greater than or equal to $m$.
        \item[(b)] The gradient of log-density $\nabla^{(j)} \log f(x)$ at $x$ exists and $\|\nabla^{(j)} \log f(x)\|_{{\rm sp}} \leq C_j(x)$ for all $j \in [p+1]$.
        \item[(c)] The non-negative kernel function $K$ satisfies $ \int_{\mathbb{R}^d} |x_i|^p K(x) dx < +\infty$ for any $i \in [d]$.
        \item[(d)] Bandwidth $h$ depends on $n$ such that $h \to 0$ and $nh^{d+2m} \to \infty$ as $n \to \infty$.
    \end{enumerate}
\end{assumption}

The following theorem provides an upper bound on the  convergence rate of the proposed score function estimator. 

\begin{theorem}
\label{thm:theorem_1}
Under Assumption~\ref{assum_1}, the spectral norm error of the LLSFE $\widehat{\mathcal{S}^{(p)}_m}(x)$ defined in \eqref{eq:hat_S_poly} is upper bounded by
\begin{eqnarray}
\| \widehat{\mathcal{S}^{(p)}_m}(x) - \mathcal{S}_m(x) \|_{\rm sp} &\leq& O(d^{m/2} h^{p+1-m}) + O_p(d^{m/2} (nh^{d+2m})^{-1/2}).
\end{eqnarray}
\end{theorem}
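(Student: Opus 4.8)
The plan is to decompose the error into a \emph{bias} term and a \emph{variance} term by exploiting the polynomial representation \eqref{eq:hat_S_poly}. Since $\widehat{\mathcal{S}_m^{(p)}}(x) - \mathcal{S}_m(x) = \sum_{\lambda \in \Lambda_m} (-1)^m c_m(\lambda)\, \big[ {\rm sym}(\bigotimes_{j\in\lambda} \widehat{\mathcal{A}}_j^{(p)}) - {\rm sym}(\bigotimes_{j\in\lambda} \mathcal{G}_j) \big]$, I would first reduce, by the triangle inequality and submultiplicativity of the spectral norm under tensor products, the problem to controlling $\|\widehat{\mathcal{A}}_j^{(p)}(x) - \mathcal{G}_j(x)\|_{\rm sp}$ for each $j \in [m]$; a telescoping identity of the form $\bigotimes_j \widehat{\mathcal{A}}_j - \bigotimes_j \mathcal{G}_j = \sum_j (\cdots)\otimes(\widehat{\mathcal{A}}_j - \mathcal{G}_j)\otimes(\cdots)$ together with the a priori bounds $\|\mathcal{G}_j\|_{\rm sp}\le C_j(x)$ from Assumption~\ref{assum_1}(b) and boundedness of the estimators will do this, absorbing the combinatorial constants $c_m(\lambda)$, $|\Lambda_m|$, and the symmetrization into the $O(\cdot)$. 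The factor $d^{m/2}$ will emerge from passing between Frobenius and spectral norms for the $j$-th order tensor pieces (a $d^{j/2}$ per factor, whose product over a partition of $m$ gives $d^{m/2}$).

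Next I would establish the single-order estimate
\[
\|\widehat{\mathcal{A}}_j^{(p)}(x) - \mathcal{G}_j(x)\|_{\rm sp} \;\le\; O\big(h^{p+1-j}\big) \;+\; O_p\big((nh^{d+2j})^{-1/2}\big),
\]
for each $j \le m$, which is really the heart of the matter. For the bias, the defining equations \eqref{eq:LLDE} say that $\widehat{a}$ matches the empirical local moments of the kernel; replacing the empirical averages by their expectations gives the "population" LLDE, and a Taylor expansion of $\log f$ to order $p$ shows the $j$-th coefficient of the population fit differs from $\mathcal{G}_j(x) = \nabla^{(j)}\log f(x)$ by the leading truncation term, of order $h^{(p+1)-j}$ — the $-j$ arising because $\widehat{\mathcal{A}}_j$ is read off after $j$ scalings by $h^{-1}$ inside the moment integrals. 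Here Assumption~\ref{assum_1}(c) guarantees the required kernel moments are finite and Assumption~\ref{assum_1}(a) ($p\ge m\ge j$) ensures the exponent $p+1-j$ is nonnegative. For the variance, I would linearize: the map from the empirical moment vector on the right-hand side of \eqref{eq:LLDE} to the solution $\widehat{a}$ is smooth near the population solution (invertibility of the relevant "design matrix" of kernel moments, which I would state as a nondegeneracy consequence of $K$ being a genuine kernel), so a delta-method / implicit-function-theorem argument transfers the fluctuation of the empirical moments to $\widehat{a}$. The $j$-th empirical moment is an average of i.i.d.\ terms $(\tfrac{X_i-x}{h})^{\otimes j}K(\tfrac{X_i-x}{h})$ whose variance, after the $h^{-j}$ rescaling that recovers $\widehat{\mathcal{A}}_j$, scales like $(nh^{d+2j})^{-1}$ — the $h^{d}$ from the shrinking kernel support and the $h^{2j}$ from the two $h^{-j}$ scalings — giving the stated $O_p((nh^{d+2j})^{-1/2})$ fluctuation; Assumption~\ref{assum_1}(d) makes this vanish. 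Combining across $j\le m$ and using $h\to 0$ so that $h^{p+1-j}$ and $(nh^{d+2j})^{-1/2}$ are dominated by their $j=m$ versions yields the claimed bound.

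I expect the \textbf{main obstacle} to be the variance analysis of the implicitly-defined estimator $\widehat{\mathcal{A}}_j^{(p)}$: unlike a plain kernel estimator, $\widehat{a}$ solves the nonlinear system \eqref{eq:LLDE}, so one must (i) argue existence, uniqueness, and consistency of the solution in a neighborhood of the population fit, (ii) control the conditioning of the Jacobian of the moment map uniformly, and (iii) push the i.i.d.\ CLT-type fluctuation of the empirical kernel moments through this map while keeping track of the $d$- and $h$-dependence for every tensor order simultaneously. The bias side is comparatively routine Taylor expansion, and the reduction from $\mathcal{S}_m$ to the $\mathcal{G}_j$'s is pure algebra; it is the stochastic control of the plug-in solution, and the bookkeeping of the $d^{m/2}$ dimension factors through tensor-norm conversions, that will require the most care.
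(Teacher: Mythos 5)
Your reduction is the same as the paper's: both write $\widehat{\mathcal{S}^{(p)}_m}-\mathcal{S}_m$ via the partition representation \eqref{eq:hat_S_poly}, peel off the symmetrization using $\|\mathrm{sym}(\mathcal{T})\|_{\rm sp}\le\|\mathcal{T}\|_{\rm sp}$, expand the difference of tensor products so that every surviving term carries at least one error factor $\widehat{\mathcal{A}}_j^{(p)}-\mathcal{G}_j$ while the remaining factors are bounded by the constants $C_j(x)$ of Assumption~\ref{assum_1}(b), and pass from Frobenius to spectral norm to collect the dimension factors, with the partition $\lambda=\{m\}$ giving the dominant $d^{m/2}h^{p+1-m}$ term. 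The one substantive difference is how the single-order estimate $\|\widehat{\mathcal{A}}_j^{(p)}-\mathcal{G}_j\|$ is obtained: the paper simply cites Theorem 1 of \cite{loader1996local} (its Lemma~\ref{lem:lemma_1}) for the entrywise rate $O(h^{p+1-j})+O_p((nh^{d+2j})^{-1/2})$, whereas you propose to re-derive it from scratch via a Taylor-expansion bias bound and a delta-method/implicit-function variance analysis of the nonlinear moment equations \eqref{eq:LLDE}. What you flag as the main obstacle---existence, uniqueness and consistency of the solution to \eqref{eq:LLDE}, conditioning of the Jacobian of the moment map, and propagating the empirical-moment fluctuations through it---is precisely the content of Loader's theorem, so your route is more self-contained but substantially longer, while the paper's buys brevity by importing the hardest step wholesale. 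Your sketch of that step matches the standard local-likelihood analysis and the rest of your argument tracks the paper's, so I see no gap, only unexecuted (and nontrivial) detail in exactly the part the paper outsources to a citation.
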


\begin{proof}(Sketch)
Note that the estimator is derived by replacing the truth gradients of log-density $(g_1(x), G_2(x), \dots, \mathcal{G}_m(x))$ by the estimates $(\widehat{a}_0(x), \widehat{a}_1(x), \widehat{A}_2(x), \dots, \widehat{\mathcal{A}}_p(x))$. Since we assumes that $\|\mathcal{G}_j(x)\|_{\rm sp} \leq C_j$, so it suffices to upper bound the spectral norm of the error $\|\widehat{\mathcal{A}}_j^{(p)}(x) - \mathcal{G}_j(x)\|_{\rm sp}$. The following lemma provides upper bounds for each entry of $\widehat{\mathcal{A}}_j^{(p)}(x) - \mathcal{G}_j(x)$. For simplicity of notation, we fix an $x$ drop the dependency on $x$.

\begin{lemma}{\cite[Theorem 1]{loader1996local}}
\label{lem:lemma_1}
Under Assumption~\ref{assum_1}  we have
\begin{eqnarray}
\left( \widehat{\mathcal{A}}_j^{(p)} \right)_{(i_1, \dots, i_j)} - \left( \mathcal{G}_j \right)_{(i_1, \dots, i_j)} &=& O(h^{p+1-j}) + O_p((nh^{d+2j})^{-1/2}),
\end{eqnarray}
for any $j \in \{1, \dots, p\}$ and $i_1, \dots, i_j \in [d]^j$.
\end{lemma}

The spectral norm of of the error $\|\widehat{\mathcal{A}}_j^{(p)} - \mathcal{G}_j\|_{\rm sp}$ is upper bounded by the Frobenius norm. Then applying Lemma~\ref{lem:lemma_1}, we have,
\begin{eqnarray}
\|\widehat{\mathcal{A}}_j^{(p)} - \mathcal{G}_j\|_{\rm sp} &\leq& O(d^{j/2}h^{p+1-j}) + O_p(d^{j/2}(nh^{d+2j})^{-1/2}).
\end{eqnarray}

Substituting this result into the polynomial representations~\eqref{eq:S_poly} and~\eqref{eq:hat_S_poly}, we obtain the desired rate.
\end{proof}

\begin{remark}
\label{rem:remark_1}
By setting $h = n^{-1/(2p+2+d)}$, we obtain
\begin{eqnarray}
\| \widehat{\mathcal{S}^{(p)}_m}(x) - \mathcal{S}_m(x) \|_{\rm sp} &\leq&  O_p(d^{m/2} n^{-(p+1-m)/(2p+2+d)}).
\end{eqnarray}
\end{remark}

\begin{remark}
    It was shown in~\citep{stone1980optimal} that the optimal rate for estimating an entry of $\mathcal{G}_j$ is $O_p(n^{-(p+1-m)/(2p+2+d)})$. We conjecture that LLSFE is also minimax rate-optimal.
\end{remark}

\subsection{Second Degree LLSFE}
\label{sec:computational}

In the previous subsection, we proved the convergence rate of the LLSFE. However, the computational cost of LLSFE can be 
 large since numerical integration is needed to compute the integral in~\eqref{eq:LLDE}. 
 To trade off the accuracy and computational cost, we choose Gaussian kernel $K(u) \propto \exp\{\|u\|^2/2\}$ and degree $p = 2$. This makes the integration in the LHS of~\eqref{eq:LLDE} tractable and we obtain closed-form expressions for $a_0$, $a_1$ and $A_2$. Using ideas from ~\cite[Proposition 1]{gao2016breaking}, our estimators for $a_1$ and $A_2$ are:
\begin{eqnarray}
    \widehat{a}_1 &=& (\frac{M_2}{M_0} - (\frac{M_1}{M_0})(\frac{M_1}{M_0})^T)^{-1} \frac{M_1}{M_0} \;, \,\\
    \widehat{A}_2 &=& h^{-2}I_{d \times d} - (\frac{M_2}{M_0} - (\frac{M_1}{M_0})(\frac{M_1}{M_0})^T)^{-1} \;,
\end{eqnarray}
where $M_j = \sum_{i=1}^n (X_i - x)^{\otimes j} \exp\{-\frac{\|X_i - x\|^2}{2h^2}\}$ for $j \in \{0,1,2\}$.

The second degree LLSFE is derived by plugging $\widehat{a}_1$ and $\widehat{A}_2$ into~\eqref{eq:hat_S_poly}. The computational complexity of second degree LLDFE is $O(n \cdot d^2)$. In the experiments below, we use this second degree estimator and practically show that using second degree estimator as an compromise does not hurt the performance by much.

\subsection{Synthetic Simulations of LLSFE}
\label{subsec:synthetic}

In this experiment we validate the performance of LLSFE, for both Gaussian and non-Gaussian distributions. For Gaussian distribution, we choose $x \sim \mathcal{N}(0,I_d)$ and $d = 2$. The ground truth score functions are $\mathcal{S}_2 = xx^T - I_d$ and $\mathcal{S}_4 = x^{\otimes 4} - 6{\rm sym}(x \otimes x \otimes I_d) + 3 I_d \otimes I_d$. We show the spectral error $\|\widehat{\mathcal{S}}_2 - \mathcal{S}_2\|_{\rm sp}$ versus number of sample $n$ for estimation of $\mathcal{S}_2$, and the Frobenius error $\|\widehat{\mathcal{S}}_4 - \mathcal{S}_4\|_{\mathcal{F}}$ for estimation of $\mathcal{S}_4$ (since computing spectral norm of high-order tensor is NP-hard~\cite{hillar2013most}). We plot the $\{95\%, 75\%, 50\%, 25\%, 5\%\}$ percentiles of our estimation error over $10,000$ independent trials for the estimation of $\mathcal{S}_2$ and $50,000$ independent trails for the estimation of $\mathcal{S}_4$.

We can see from Figure~\ref{fig:gaussian} that all the percentiles of the estimation error decrease as $n$ increases. The log-log scale plot is closed to linear, and the average slope is $-0.5143$ for $\|\widehat{\mathcal{S}}_2 - \mathcal{S}_2\|_{\rm sp}$ and $-0.4984$ for $\|\widehat{\mathcal{S}}_4 - \mathcal{S}_4\|_{\mathcal{F}}$. This suggests that LLSFE is consistent and the error decreases at a faster rate than the theoretical upper bound in Remark~\ref{rem:remark_1}.

\begin{figure}[h]
    \centering
    \includegraphics[width=0.4\textwidth]{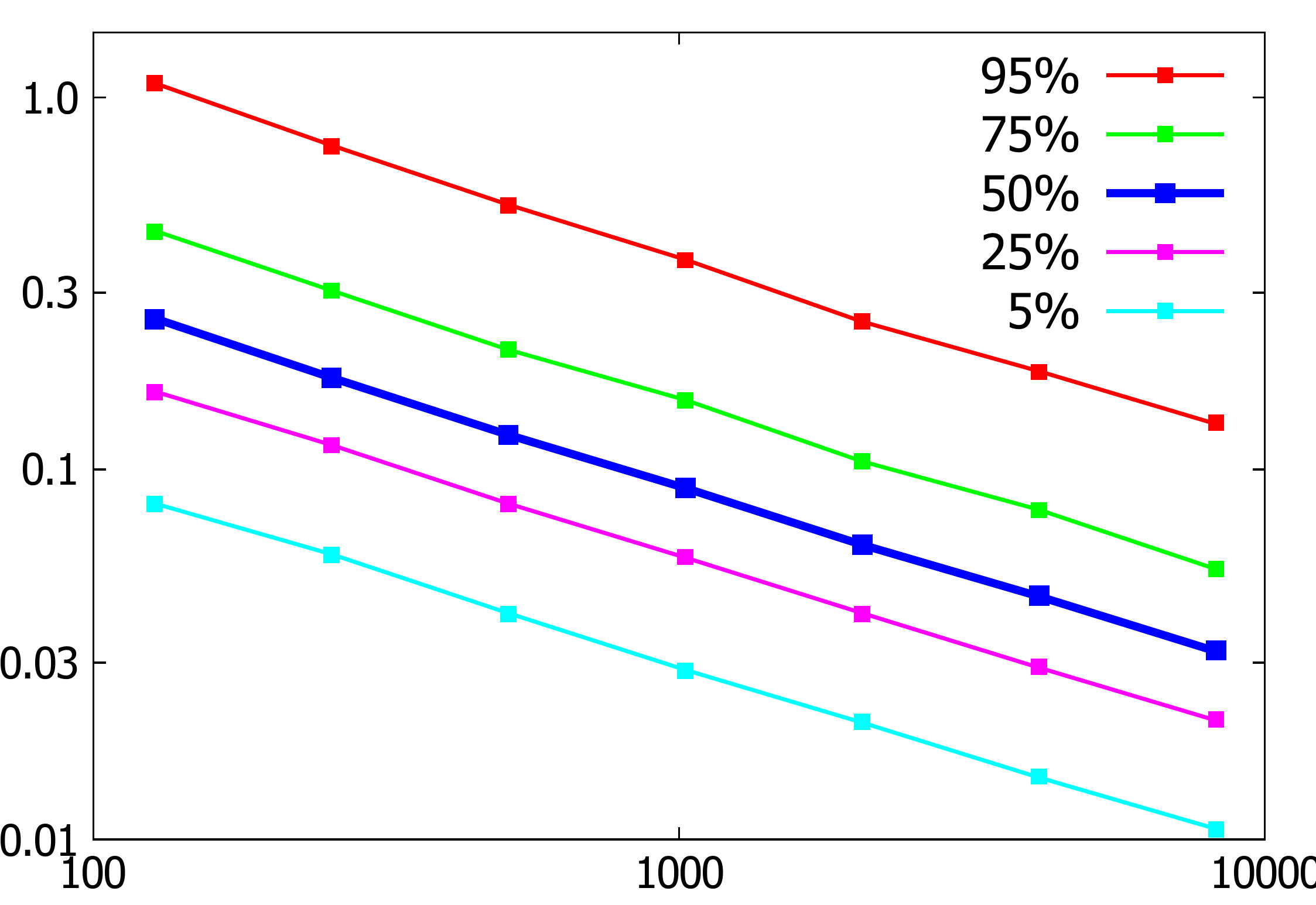}
    \put(-120,-10){sample size $n$}
    \put(-210,45){\rotatebox{90}{$\|\widehat{\mathcal{S}}_2 - \mathcal{S}_2\|_{\rm sp}$}}
    \hspace{0.25in}
    \includegraphics[width=0.4\textwidth]{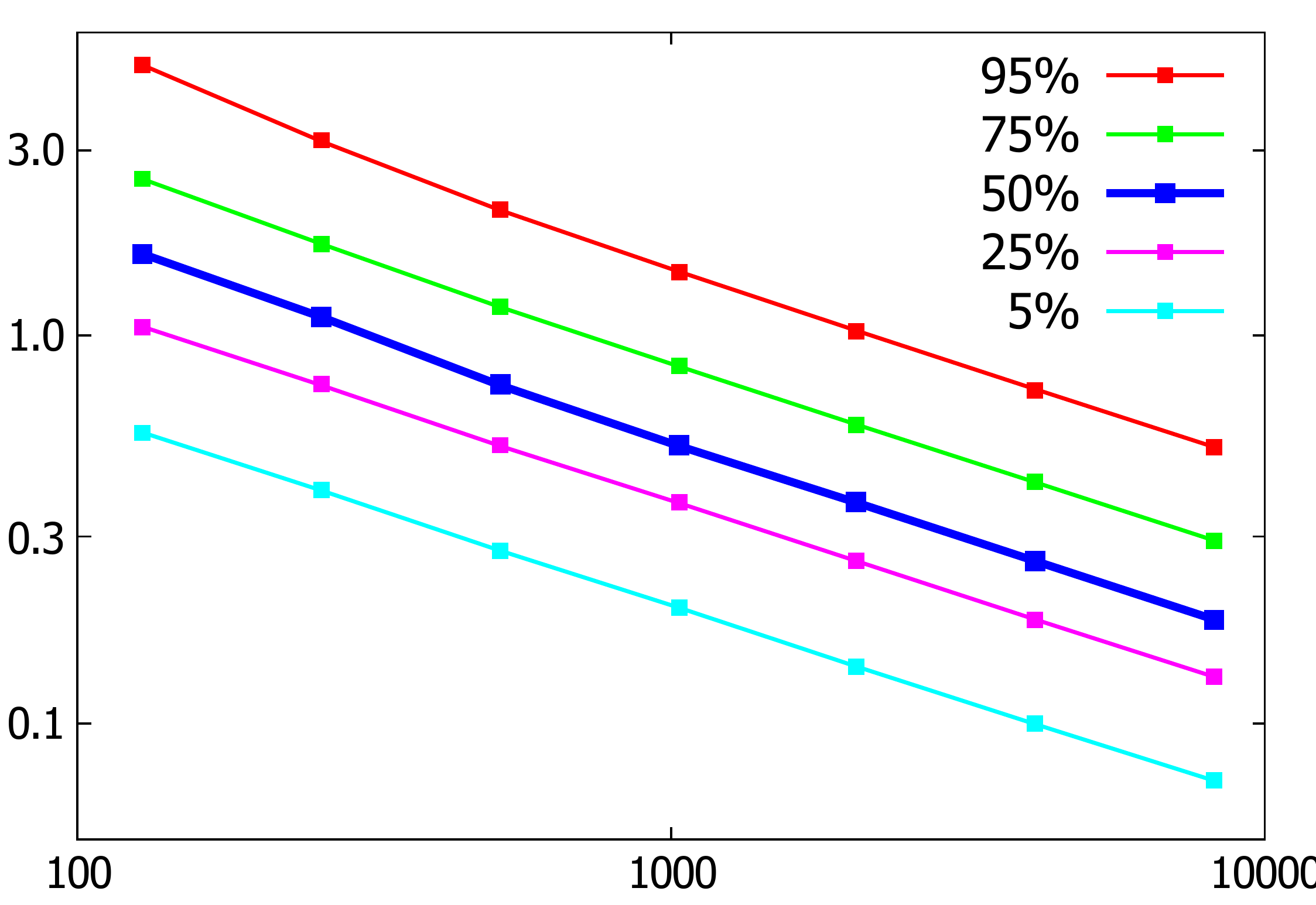}
    \put(-120,-10){sample size $n$}
    \put(-210,45){\rotatebox{90}{$\|\widehat{\mathcal{S}}_4 - \mathcal{S}_4\|_{\mathcal{F}}$}}
    \caption{Error of score function estimator versus sample size for $x \sim \mathcal{N}(0,I_d)$. Top: $\|\widehat{\mathcal{S}}_2 - \mathcal{S}_2\|_{\rm sp}$. Bottom: $\|\widehat{\mathcal{S}}_4 - \mathcal{S}_4\|_{\mathcal{F}}$.}
    \label{fig:gaussian}
\end{figure}

For  the non-Gaussian case, we choose $x \sim 0.5\mathcal{N}({\bf 1}_d,I_d) + 0.5\mathcal{N}(-{\bf 1}_d, I_d)$ where ${\bf 1}_d$ is the all-1 vector and $d=2$.
We also plot the percentiles of the estimation errors in log-log scale in Figure~\ref{fig:mixed}. We can see that LLSFE gives a consistent estimate for the non-Gaussian case too, and the rate is $-0.2587$ for $\|\widehat{\mathcal{S}}_2 - \mathcal{S}_2\|_{\rm sp}$ and $-0.1343$ for $\|\widehat{\mathcal{S}}_4 - \mathcal{S}_4\|_{\mathcal{F}}$, which are also faster than the upper bound in Remark~\ref{rem:remark_1}.

\begin{figure}[h]
    \centering
    \includegraphics[width=0.4\textwidth]{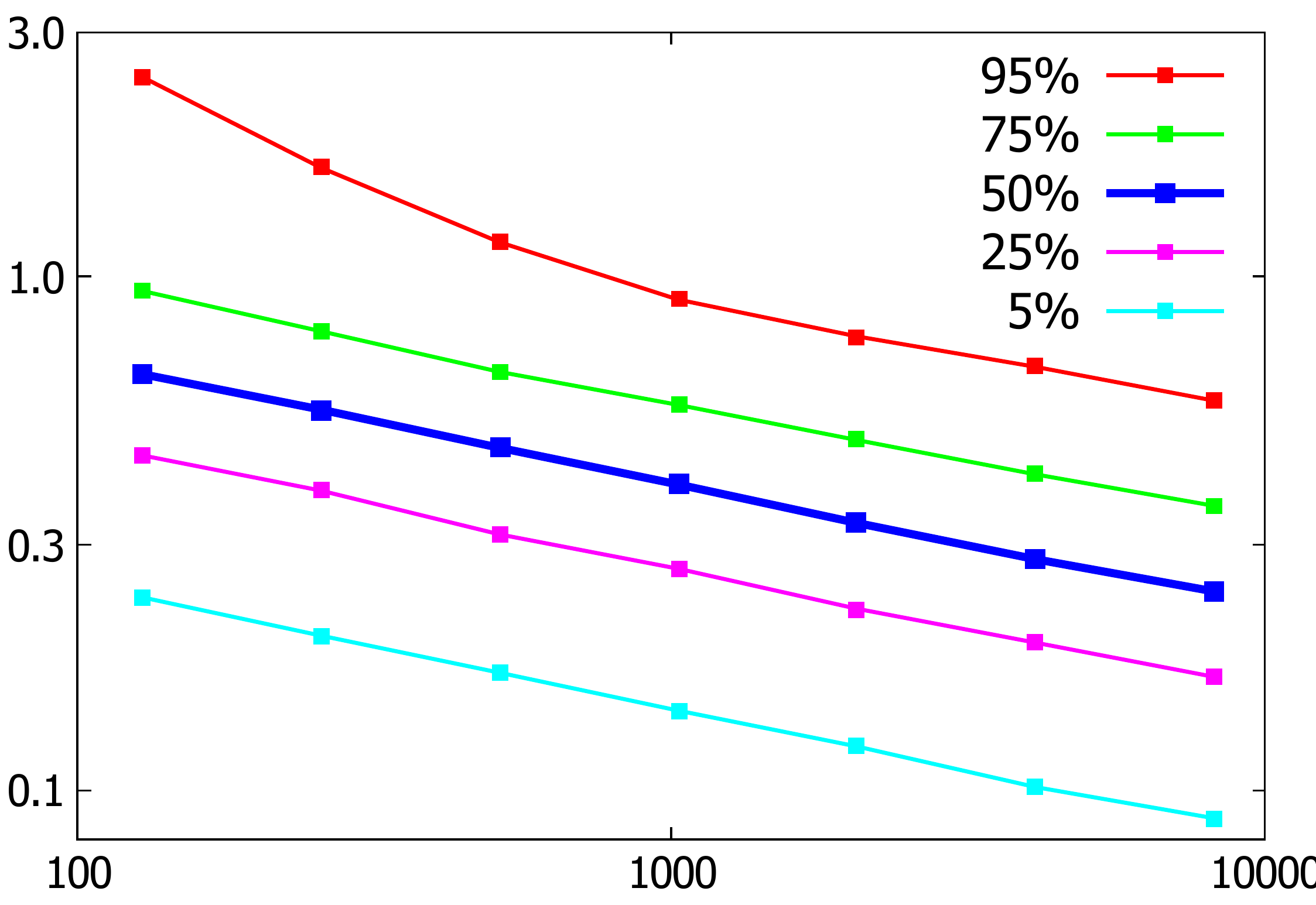}
    \put(-120,-10){sample size $n$}
    \put(-210,45){\rotatebox{90}{$\|\widehat{\mathcal{S}}_2 - \mathcal{S}_2\|_{\rm sp}$}}
    \hspace{0.25in}
    \includegraphics[width=0.4\textwidth]{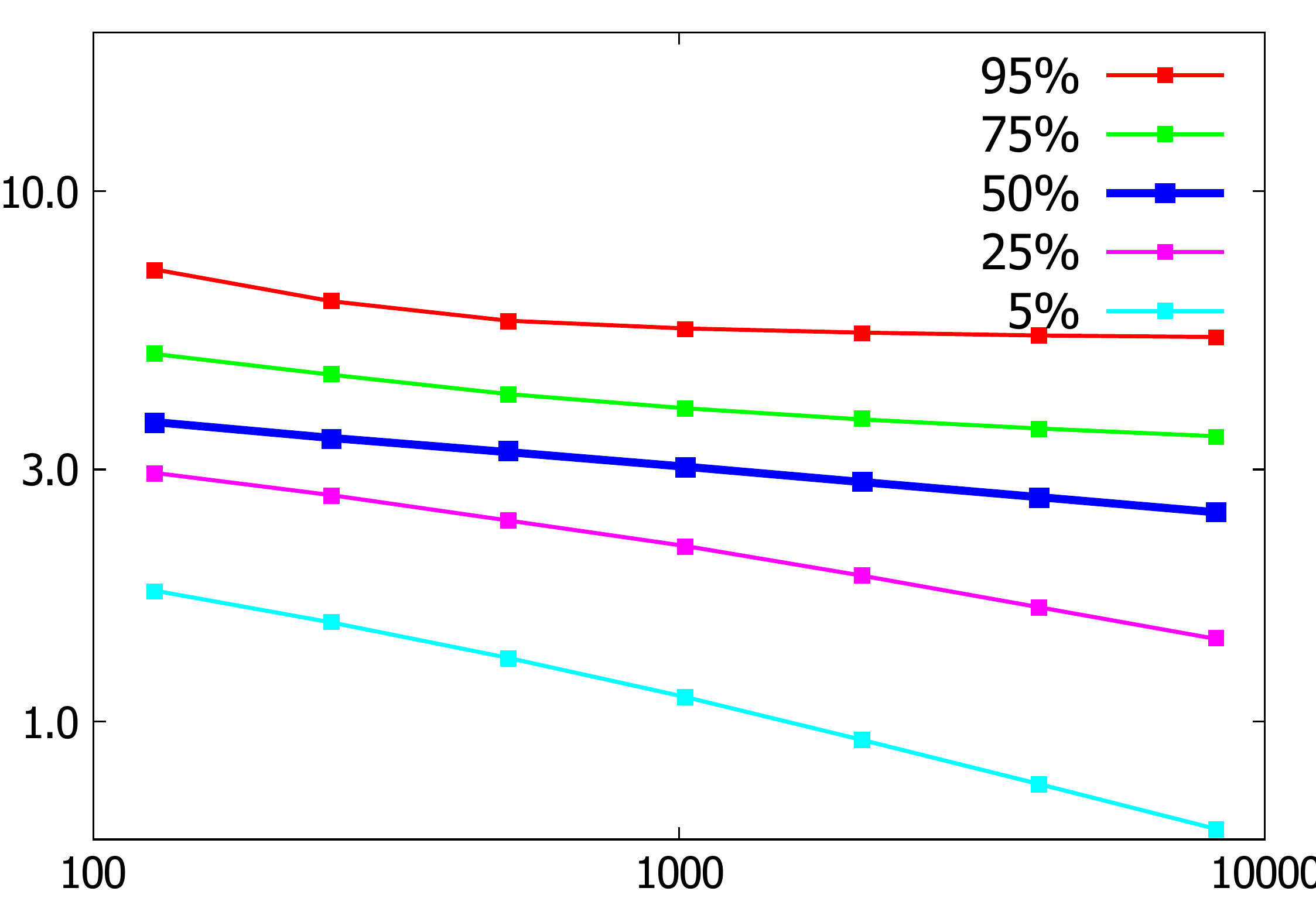}
    \put(-120,-10){sample size $n$}
    \put(-210,45){\rotatebox{90}{$\|\widehat{\mathcal{S}}_4 - \mathcal{S}_4\|_{\mathcal{F}}$}}
    \caption{Error of score function estimator versus sample size for $x \sim 0.5\mathcal{N}({\bf \mu},I_d) + 0.5\mathcal{N}(-{\bf \mu}, I_d)$. Top: $\|\widehat{\mathcal{S}}_2 - \mathcal{S}_2\|_{\rm sp}$. Bottom: $\|\widehat{\mathcal{S}}_4 - \mathcal{S}_4\|_{\mathcal{F}}$.}
    \label{fig:mixed}
\end{figure}

\section{Design of landscape}
\label{sec:landscape}

In this section, we show 
how the proposed density functional estimators can be applied to 
design a loss function with desired properties, 
for  regression problems under a neural network model. 
This gives a novel loss function that does not require the data to be distributed as Gaussian, 
as typically done in existing literature (cf.~Section~\ref{sec:intro} Related work). 

Concretely, we consider the problem of 
training a  
one-hidden-layer neural network where, 
for each input $x \in \reals^d$, 
the corresponding output  is given by
\begin{eqnarray}
	\hat{y}(x) \;\;=\;\; \sum_{i=1}^k w_i \,g(\inner{a_i}{x})\;, 
\end{eqnarray}
with weights are $w_i \in \reals$ and $a_i\in \reals^{d}$, 
non-linear activation is $g:\reals\to\reals$,  
and  the number of hidden neurons is $k \leq d $. 
Given labeled training data $(x,y)$ coming from some distribution, 
a standard approach to training such a network 
is to use the $\ell_2$ loss: 
\begin{eqnarray}
	\ell_2(A) \; =\;  \Expect[\|\widehat{y}(x) - y\|^2]\;, 
\end{eqnarray} 
as the training objective, where
$A$ denotes the weights of the neural network model.
However, traditional optimization techniques on $\ell_2$ can easily get stuck in local optima as empirically shown in~\citep{livni2014computational}. 
This phenomenon can be explained precisely under 
a canonical scenario where the data is generated from a 
 ``teacher neural network'': 
\begin{eqnarray}
	y \; =\; \sum_{i=1}^k w_i^\ast \, g(\inner{a_i^\ast}{x}) \,+\, \eta \;.
	\label{eq:model}
\end{eqnarray}
where the true parameters are $w_i^\ast \in \reals$ and  $a_i^\ast \in \reals^d$, 
and  $\eta$ is a zero-mean noise independent of $x$.
 This assumption that the data also comes from a one-hidden-layer neural network 
is critical in recent mathematical understanding of 
 neural networks, in 
showing the gain of a shallow ResNet by \cite{li2017convergence}, 
various properties of the critical points by \cite{tian2017analytical}, 
and showing that the standard $\ell_2$ minimization is prone to get stuck at 
non-optimal critical points by \cite{ge2017learning}. 
A major limitation of this line of research is that they 
rely critically on the Gaussian assumption on the data $x$. 
The analysis techniques use specific properties of spherical 
Gaussian random variables such that the theoretical findings do not generalize to any other distributions. 
Further, the estimators designed as per those analyses 
fail to give consistent estimates for non-Gaussian data. 

\begin{figure*}[t]
    \centering
    \includegraphics[width=0.42\textwidth]{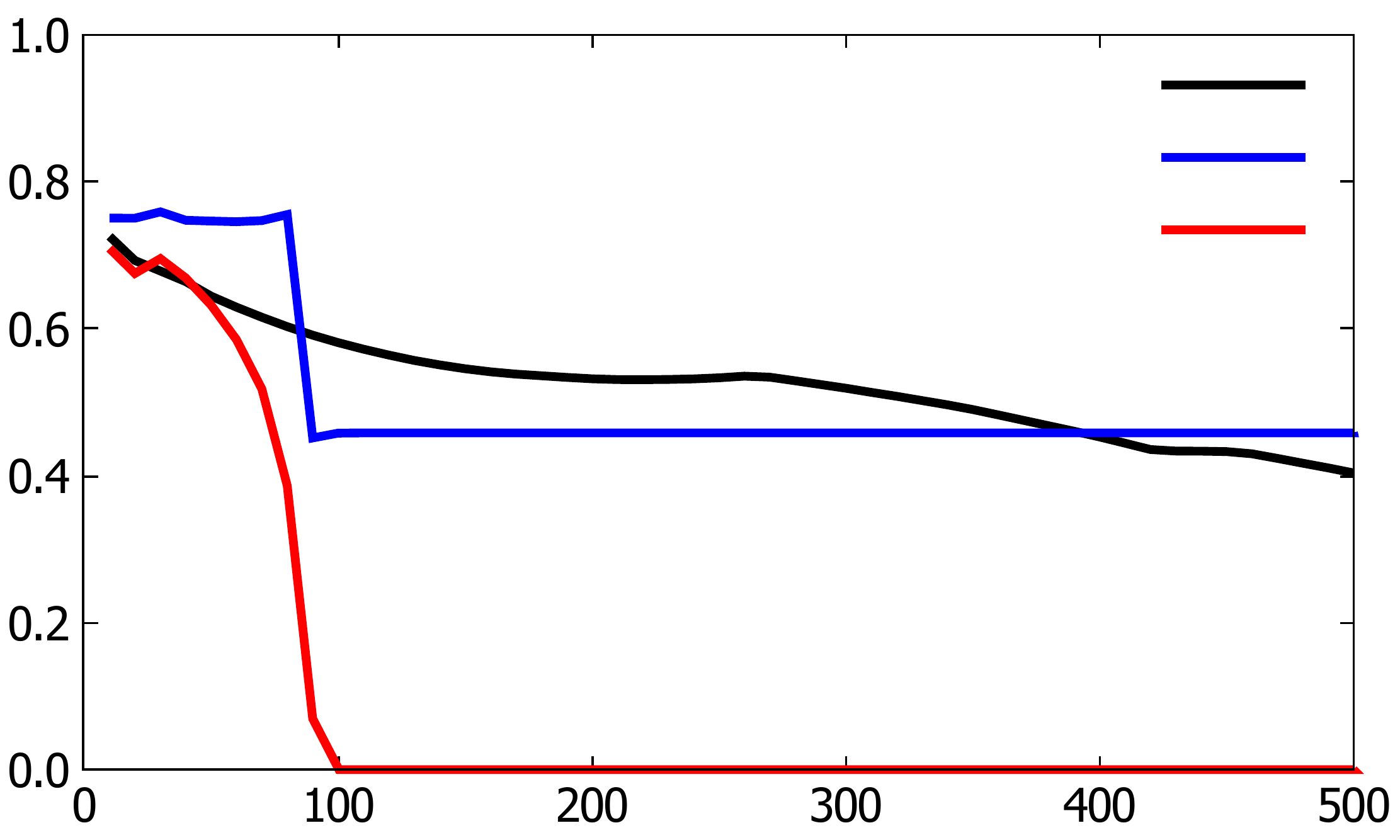}
    \put(-115,-10){Iterations}
    \put(-210,25){\rotatebox{90}{Estimation Error}}
    \put(-97,103){SGD on $\ell_2(\cdot)$}
    \put(-97,93){SGD on $G(\cdot)$}
    \put(-96,83){SGD on $L(\cdot)$}
    \hspace{-0.0in}
    \includegraphics[width=0.42\textwidth]{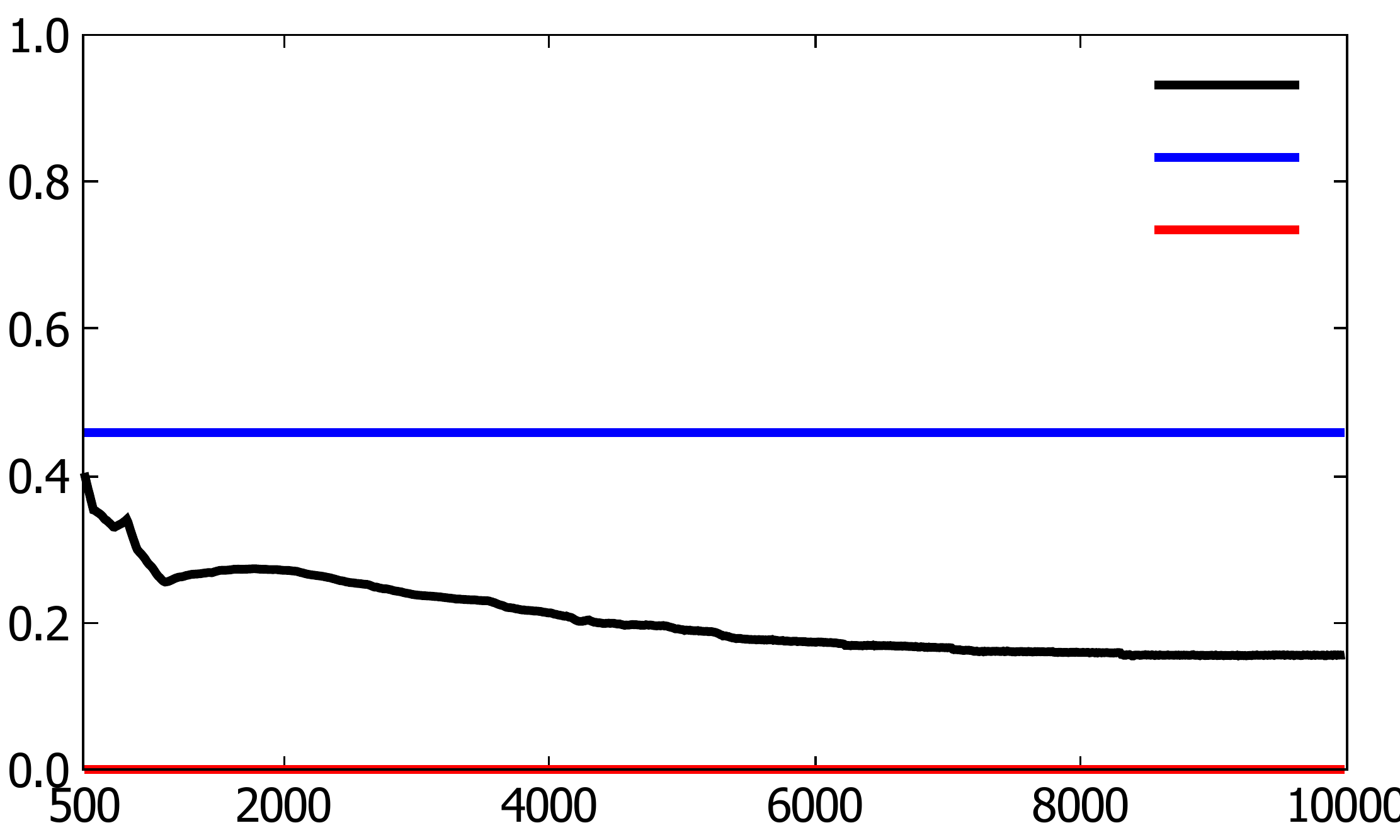}
    \put(-115,-10){Iterations}
    \put(-97,103){SGD on $\ell_2(\cdot)$}
    \put(-97,93){SGD on $G(\cdot)$}
    \put(-96,83){SGD on $L(\cdot)$}
    \caption{
    SGD to learn a one-layer-ReLU network in \prettyref{eq:model} on the proposed objective function $L(A)$ defined in
    \eqref{eq:lossfuncdef} converges to a global minimum with random initialization, whereas on $\ell_2$-loss $\ell_2(A)$ and $G(A)$, it gets stuck at bad local minima.
    Left: First 500 iterations. Right: 500-10,000 iterations.
    }
    \label{fig:exponential}
\end{figure*}

We showcase this limitation in Figure \ref{fig:exponential}, 
where the data is generated from a Laplacian distribution. 
The details of this experiment is provided in Section \ref{subsec:landscape}. 
Minimizing $\ell_2$ loss converges slowly and gets stuck at 
sub-optimal critical points, consistent with previous observations \citep{li2017convergence}. 
To overcome this weakness 
\cite{ge2017learning} proposed  
applying Stochastic Gradient Descent (SGD) on a novel loss function $G(A)$
designed from the analysis under the Gaussian assumption. 
This fails to converge to an optimal critical point for non-Gaussian distributions. 
To overcome this limitation, we propose a novel loss function $L(A)$ that generalizes to 
a broad class of distributions. 

We focus on the task of recovering the weights $a_i^*$'s, and 
denote the set by a matrix $A^\top =[a_1|\ldots|a_k] \in \reals^{d \times k} $. 
The scalar weights $w_i^*$'s can be separately estimated using standard least squares, once $A$ has been recovered. 
We propose applying SGD on a new loss function $L(A)$, defined as
\begin{align}
	& L(A) \;= \sum_{i,j \in [k], i \neq j} \Expect[y \cdot t_1(x,a_i,a_j)] - \mu \sum_{i \in [k]} \Expect[y \cdot t_2(x,a_i)] + \lambda \sum_{i \in [k]} (\norm{a_i}-1)^2 \;,
	\label{eq:lossfuncdef}
\end{align}
where $\mu,\lambda>0$ are regularization coefficients, and 
\begin{eqnarray}
t_1(x,u,v) &=& \calS_4(x)(u,u,v,v), \,\notag\\
t_2(x,u) &=& \calS_4(x)(u,u,u,u), \; u,v \in \reals^d.
\end{eqnarray}
are the applications of the score functions 
 $\mathcal{S}_m(x) = \nabla^{(m)} f(x) / f(x)$ on the weight vectors $a_i$'s that we are optimizing over, 
 i.e.~$\calS_4(x)(u,v,w,z) = (1/f(x)) \sum_{i_1,i_2,i_3,i_4} \nabla_{x_{i_1}x_{i_2}x_{i_3}x_{i_4}}f(x)  u_{i_1} v_{i_2} w_{i_3} z_{i_4} $.  
 We provide  formulas for some simple distributions below. 
 
 \begin{example}[Gaussian]
\label{examp:gaussian}
If $x \sim \calN(0,I_d)$, we have that $t_2^{(G)}(x,u)=(u^\top x)^4 - 6 \norm{u}^2 (u^\top x)^2 + 3\norm{u}^4$ and $t_1^{(G)}(x,u,v)=(u^\top x)^2(v^\top x)^2 - \norm{u}^2(v^\top x)^2 - 4(u^\top x)(v^\top x)(u^\top v) - \norm{v}^2 (u^\top x)^2+\norm{u}^2 \norm{v}^2 + 2 (u^\top v)^2$.
\end{example}

\begin{example}[Mixture of Gaussians]
\label{examp:mgg}
If $x \sim p \calN(\mu_1,I_d)+(1-p)\calN(\mu_2,I_d)$, we have that $t_1(x,u,v)=p_1 t_1^{(G)}(x-\mu_1,u,v)+(1-p_1)t_1^{(G)}(x-\mu_2,u,v)$ where the posterior $p_1 \define \frac{p\calN(\mu,I_d)}{ p \calN(\mu_1,I_d)+(1-p)\calN(\mu_2,I_d)}$. Similarly for $t_2$.
\end{example}

The proposed  $L(\cdot)$ is carefully designed to ensure that the loss surface has a desired landscape with no local minima. 
Here, we give the intuition behind the design principle, and 
make it precise in the main results of Theorems \ref{thm:landscape} and \ref{thm:finitesample}. 
This landscape explains the experimental superiority of $L(\cdot)$ 
in Figures \ref{fig:exponential} and \ref{fig:training}.
Suppose $k=d$ and $a_i^*$'s are orthogonal vectors. 
After some calculus, 
an alternative characterization for $L$ is given by
\begin{eqnarray}
L(A) &=& \sum_{i \in [d]} w_i^\ast \Expect[g^{(4)}(\inner{a_i^\ast}{x})] \sum_{j,k \in [d], j\neq k} \inner{a_i^\ast}{a_j}^2 \inner{a_i^\ast}{a_k}^2 \,\notag\\
&&\; -\mu \sum_{i,j \in [d]} w_i^\ast \Expect[g^{(4)}(\inner{a_i^\ast}{x})]\inner{a_i^\ast}{a_j}^4  +\lambda \sum_{i \in [d]}(\norm{a_i}-1)^2  \nonumber \\
&=&\sum_{i \in [d]} \kappa_i^\ast\sum_{j,k \in [d], j\neq k} \inner{a_i^\ast}{a_j}^2 \inner{a_i^\ast}{a_k}^2  -\mu \sum_{i,j \in [d]} \kappa_i^\ast \inner{a_i^\ast}{a_j}^4+\lambda \sum_{i \in [d]}(\norm{a_i}^2-1)^2\;, \label{eq:temp}
\end{eqnarray}
for  scalar $\kappa_i^*=w_i^\ast \Expect[g^{(4)}(\inner{a_i^\ast}{x})] $ that does not depend on the variables we optimize over. 

Notice that when the weights are recovered up to a permutation, that is 
$a_i=\pm a^\ast_{\pi(i)}$ for some permutation $\pi$, the first term in \prettyref{eq:temp} equals zero. 
We can show that these are the only possible local minima in the minimization of the first term under unit-norm constraints, whenever all $\kappa_i^\ast=1$.
 Thus in order to account for this weighted tensor based loss and to avoid spurious local minima, the regularization term $\mu \sum_{i,j \in [d]} \kappa_i^\ast \inner{a_i^\ast}{a_j}^4$ forces these
 spurious minima to lie close to a permutation of $a_i^\ast$ up to a sign flip. 
 This is made precise in the characterization of the landscape of $L(\cdot)$ 
 in the proof of Theorem \ref{thm:landscape}.
The proof strategy is inspired by the landscape analysis technique of  \cite{ge2017learning}, 
where a similar analysis was done for Gaussian data $x$. 


\subsection{Theoretical results}

We now formally state the assumptions for our theoretical results.
\begin{assumption}
\label{assump:landscapeassump}
\begin{enumerate}
        \item[(a)] The ground-truth parameters $w_i^\ast,a_i^\ast $ are such that $w_i^\ast \Expect[g^{(4)}(\inner{a_i^\ast}{x})]$ has the same sign for all $i \in [k]$.
        \item[(b)] Defining $\kappa_i^\ast=w_i^\ast \Expect[g^{(4)}(\inner{a_i^\ast}{x})]$ and $\kappa^\ast=\max_i \kappa_i^\ast/(\min_i \kappa_i^\ast)$, 
        we choose $\mu<c/\kappa^\ast$ and $\lambda \geq \kappa^\ast_{\max}/c$ for $c \leq 0.01$.
        \item[(c)] $k=d$ and $A \in \reals^{d \times d}$ is an orthogonal matrix.
    \end{enumerate}
\end{assumption}

The following theorem characterizes the landscape of $L(\cdot)$.
\begin{theorem}
	\label{thm:landscape}
	Under \prettyref{assump:landscapeassump}, the objective function $L(\cdot)$ satisfies that
	\begin{enumerate}
	\item All local minima of $L$ are also global. Furthermore, all approximate local minima are also close to the global minimum. More concretely, for $\varepsilon>0$, let $A$ satisfy that
	 \begin{align*}
	 \norm{\nabla L(A)} \leq \varepsilon \text{ and } \lambda_{\min}\pth{\nabla^2 L(A)} \geq - \tau,
	 \end{align*}
	where $\tau=c \min \sth{\mu \kappa_{\min}^\ast/(\kappa^\ast d),\lambda}$. Then $A=P D A^\ast+ EA^\ast$, where $P$ is a permutation matrix, $D$ is a diagonal matrix with 
	$D_{ii} \in  \{ \pm 1 \pm O(\mu \kappa_{\max}^\ast/\lambda)\}$, and $|E|_\infty \leq O\pth{\varepsilon/(\kappa_{\min}^\ast)}$.

	 \item Any saddle point $A$ has a strictly negative curvature, \ie $\lambda_{\min}\pth{\nabla^2 L(A)} \leq -\tau$.

	\end{enumerate}
\end{theorem}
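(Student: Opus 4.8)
The plan is to follow the landscape-analysis strategy of \cite{ge2017learning}, but carried out for the general (non-Gaussian) score-function-based loss, by first reducing to the explicit polynomial form of $L$ and then analyzing its critical points. The first step is to note that Assumption~\ref{assump:landscapeassump}(c) lets us write $A = RA^\ast$ for an orthogonal matrix $R$, and by (a) we may assume (after rescaling) that every $\kappa_i^\ast>0$. Substituting into the alternative characterization~\eqref{eq:temp}, the problem reduces to studying the function
\begin{align*}
	\widetilde{L}(R) \;=\; \sum_{i \in [d]} \kappa_i^\ast \sum_{j \neq k} R_{ji}^2 R_{ki}^2 \;-\; \mu \sum_{i,j} \kappa_i^\ast R_{ji}^4 \;+\; \lambda \sum_{j} \Big( \sum_i R_{ji}^2 - 1 \Big)^2
\end{align*}
over (approximately) orthogonal $R$, since $\inner{a_i^\ast}{a_j} = R_{ji}$ when the $a_i^\ast$ are orthonormal; the general orthonormal case follows by the change of variables and an $O(1)$-conditioning argument. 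The key structural observation is that $\sum_{j\neq k} R_{ji}^2 R_{ki}^2 = (\sum_j R_{ji}^2)^2 - \sum_j R_{ji}^4$, so the first two terms combine into $\sum_i \kappa_i^\ast [ (\sum_j R_{ji}^2)^2 - (1+\mu)\sum_j R_{ji}^4 ]$, which exposes the competition between the column-norm term (minimized when each column of $R$ has one nonzero entry, i.e.\ $R$ is a signed permutation) and is controlled by the regularizers.

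Next I would carry out the first-order analysis. Setting $\nabla L(A)=0$ under the unit-norm-per-row structure (the $\lambda$ term enforces $\norm{a_i}\approx 1$), I would show that at any approximate critical point each row of $R$ is either close to a standard basis vector (up to sign) or has its mass spread out in a way that the Hessian argument in the next step rules out. Concretely, I would compute $\nabla L$ and $\nabla^2 L$ in the coordinates $R_{ji}$, decompose the Hessian quadratic form against carefully chosen directions (the standard choice being directions that move mass between the ``large'' coordinate of a row and a ``small'' one, as in \cite{ge2017learning}), and show that whenever $R$ is not $\varepsilon$-close to a signed permutation the minimum eigenvalue is at most $-\tau$ with $\tau = c\min\{\mu\kappa_{\min}^\ast/(\kappa^\ast d),\lambda\}$; this simultaneously proves part 2 and the ``approximate local min $\Rightarrow$ close to global'' half of part 1. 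The parameter regime $\mu < c/\kappa^\ast$, $\lambda \geq \kappa_{\max}^\ast/c$ in Assumption~\ref{assump:landscapeassump}(b) is exactly what makes the negative-curvature direction dominate the (possibly-wrong-sign) cross terms coming from the $\kappa_i^\ast$ being unequal.

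Finally, for the quantitative conclusion $A = PDA^\ast + EA^\ast$, I would localize around a signed permutation: writing $R = PD + \Delta$ with $P$ a permutation, $D$ diagonal with $\pm1$ entries, and $\Delta$ small, I would plug into $\norm{\nabla L(A)}\leq \varepsilon$ and solve the resulting perturbed linear system to get $|\Delta|_\infty = O(\varepsilon/\kappa_{\min}^\ast)$ for the off-``support'' entries, while the on-support diagonal entries are pinned by the $\lambda$ vs.\ $\mu$ balance to $\pm 1 \pm O(\mu\kappa_{\max}^\ast/\lambda)$ (the $\mu$ term slightly perturbs the norm-$1$ optimum). The main obstacle, and where the generalization beyond \cite{ge2017learning} really bites, is the bookkeeping with \emph{unequal} $\kappa_i^\ast$: in the Gaussian case all the relevant moments are equal, so the tensor $\sum_i (a_i^\ast)^{\otimes 4}$ is ``balanced,'' whereas here one must track the condition number $\kappa^\ast$ through every Hessian lower bound and ensure the chosen test directions still certify negative curvature — this is why $\tau$ and the error bounds carry the explicit $\kappa^\ast$, $\kappa_{\min}^\ast$, $\kappa_{\max}^\ast$ dependence, and getting those dependencies to close consistently is the technical heart of the argument.
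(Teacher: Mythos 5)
Your proposal is correct in substance, but it distributes its effort almost exactly inversely to the paper's proof. The paper's entire argument is: (i) prove the alternative characterization \prettyref{eq:temp} rigorously (this is \prettyref{thm:alternate}, whose content is the generalized Stein's lemma $\Expect[y\cdot\calS_4(x)]=\sum_i w_i^\ast\Expect[g^{(4)}(\inner{a_i^\ast}{x})](a_i^\ast)^{\otimes 4}$, i.e.\ \prettyref{lmm:generalstein} applied with $m=4$ and the independence of $\eta$ from $x$); and then (ii) observe that the resulting polynomial in $\inner{a_i^\ast}{a_j}$ is \emph{literally identical} to the objective analyzed in Theorem~2.3 of \cite{ge2017learning} --- which already accommodates unequal weights $\kappa_i^\ast$ and carries the $\kappa^\ast,\kappa_{\min}^\ast,\kappa_{\max}^\ast$ dependence you describe --- so the landscape conclusion is imported as a black box. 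You instead take \prettyref{eq:temp} as given (defensible, since it is stated in the main text, but it is the one step the paper actually proves, and your phrase ``after some calculus'' hides the Stein-lemma identity that makes the whole construction distribution-free) and then sketch a from-scratch re-derivation of the Ge--Lee--Ma landscape analysis: the reduction to $R=AA^{\ast\top}$, the identity $\sum_{j\neq k}R_{ji}^2R_{ki}^2=(\sum_j R_{ji}^2)^2-\sum_j R_{ji}^4$, the negative-curvature test directions, and the localization around signed permutations. That sketch is faithful to how Theorem~2.3 of \cite{ge2017learning} is actually proved, and your remark that the unequal-$\kappa_i^\ast$ bookkeeping is where the constants $\tau$ and the $D_{ii}$ perturbation come from is accurate; but as written it is an outline with several ``I would show'' steps rather than a proof, whereas the paper closes those steps by citation. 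If you intend the re-derivation route, you must either fully execute the Hessian test-direction argument or, as the paper does, verify that your objective matches the cited theorem's hypotheses exactly (including the sign normalization of the $\kappa_i^\ast$ from Assumption~\ref{assump:landscapeassump}(a), which you wave at with ``after rescaling'') and cite it.
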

\begin{remark}
For the case when $a_1,\ldots,a_k$ are linearly independent with $k<d$, similar conclusion hold (see \prettyref{app:generalk}).
\end{remark}


\subsection{Finite Sample Regime}
In the finite sample regime, we replace the population expectation in \prettyref{eq:lossfuncdef} with empirical expectation $\hat{\Expect}$ and optimize on the corresponding loss $\hat{L}$. The following theorem establishes that $\hat{L}$ also exhibits similar landscape properties as that of $L$ (under some mild technical assumptions outlined in \prettyref{assump:finiteassump} in Appendix~\ref{app:sec2proof}).

\begin{theorem}
\label{thm:finitesample}
Assume that \prettyref{assump:landscapeassump} and 
\prettyref{assump:finiteassump} (defined in Appendix \ref{sec:finiteproof}) hold. 
Then there exists a polynomial $\mathrm{poly}(d,1/\varepsilon)$ 
such that whenever $n \geq \mathrm{poly}(d,1/\varepsilon)$, with high probability, $\hat{L}$ exhibits the same landscape properties as that of $L$, established in \prettyref{thm:landscape}.
\end{theorem}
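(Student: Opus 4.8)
The plan is to transfer the landscape characterization of Theorem~\ref{thm:landscape} from the population loss $L$ to the empirical loss $\hat L$ via a uniform concentration argument over the constraint set, combined with the quantitative nature of the conclusions in Theorem~\ref{thm:landscape}. The key observation is that $\hat L$ and $L$ differ only in the expectations $\Expect[y\cdot t_1(x,a_i,a_j)]$ and $\Expect[y\cdot t_2(x,a_i)]$ being replaced by sample averages; since $t_1$ and $t_2$ are fixed polynomials of the (true) fourth-order score $\calS_4(x)$ evaluated at the optimization variables, the map $A \mapsto \hat L(A)$ is, for each sample, a smooth function of $A$ whose gradient and Hessian are again polynomials in the entries of $A$ with coefficients that are linear functionals of $\calS_4(X_\ell)$ and $y_\ell$. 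So the first step is to write $\nabla \hat L(A) - \nabla L(A)$ and $\nabla^2 \hat L(A) - \nabla^2 L(A)$ explicitly as empirical-minus-population averages of random tensors, and to bound their operator norms uniformly over the compact domain $\{A : \norm{a_i} \le 2 \text{ for all } i\}$ (the regularizer $\lambda(\norm{a_i}^2-1)^2$ confines all approximate critical points to this region, which should be argued first).

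The second step is the uniform concentration itself. Under \prettyref{assump:finiteassump} (which I expect to impose moment/sub-exponential type control on $y$ and on the entries of $\calS_4(x)$, plus the boundedness of $\kappa_i^\ast$ and of the activation's fourth derivative), a standard $\varepsilon$-net argument over the ball $\{\norm{a_i}\le 2\}$ — whose covering number is $\exp(O(d^2\log d))$ — together with a matrix/tensor Bernstein or Hoeffding inequality for the pointwise deviations, yields that
\begin{align*}
\sup_{\norm{a_i}\le 2 \, \forall i} \norm{\nabla \hat L(A) - \nabla L(A)} &\le \delta, \\
\sup_{\norm{a_i}\le 2 \, \forall i} \norm{\nabla^2 \hat L(A) - \nabla^2 L(A)}_{\mathrm{sp}} &\le \delta,
\end{align*}
with probability $1-o(1)$ provided $n \ge \mathrm{poly}(d,1/\delta)$ — the polynomial degree being determined by the covering-number exponent $d^2$ and the moment assumptions. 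One also needs Lipschitz control of $\nabla^2 \hat L$ and $\nabla^2 L$ in $A$ to pass from the net to the whole ball; since both are polynomials of bounded degree on a bounded set, their third derivatives are bounded by $\mathrm{poly}(d)\cdot(\text{bound on }\calS_4\text{ entries}+|y|)$, which is handled by the same concentration.

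The third step is to feed this uniform $\delta$-closeness into Theorem~\ref{thm:landscape}. If $A$ is an $(\varepsilon,\tau)$-approximate local minimum of $\hat L$ — $\norm{\nabla\hat L(A)}\le\varepsilon$ and $\lambda_{\min}(\nabla^2\hat L(A))\ge -\tau$ — then it is an $(\varepsilon+\delta,\tau+\delta)$-approximate local minimum of $L$. Choosing $\delta$ a small constant multiple of $\min\{\varepsilon,\tau\}$ (recall $\tau = c\min\{\mu\kappa_{\min}^\ast/(\kappa^\ast d),\lambda\}$ is itself $\mathrm{poly}(1/d)$ scale, so this forces $n\ge\mathrm{poly}(d,1/\varepsilon)$), Theorem~\ref{thm:landscape}(1) applies to $L$ and gives $A = PDA^\ast + EA^\ast$ with the stated form of $D$ and $\abs{E}_\infty \le O((\varepsilon+\delta)/\kappa_{\min}^\ast) = O(\varepsilon/\kappa_{\min}^\ast)$; similarly any point with small gradient but $\lambda_{\min}(\nabla^2 \hat L(A)) > -\tau - \delta \ge -\tau'$ would contradict part (2) of the theorem for $L$, so all saddles of $\hat L$ retain a strictly negative curvature of order $\tau$. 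This establishes that $\hat L$ has the same qualitative landscape. The main obstacle I anticipate is the second step: getting the uniform concentration with only a \emph{polynomial} (not exponential) sample complexity requires the per-sample random tensors $\nabla^2(y_\ell t_\bullet(X_\ell,\cdot))$ to have light enough tails that a union bound over an $\exp(O(d^2\log d))$-net still works — this is exactly where \prettyref{assump:finiteassump} must do its work (e.g.\ sub-Gaussian or bounded-moment assumptions on $\calS_4(x)$ and $y$), and care is needed because $\calS_4(x)$ for heavy-tailed input densities can itself be heavy-tailed, so the assumption likely restricts to densities whose score has controlled tails.
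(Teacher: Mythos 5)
Your proposal is correct and follows essentially the same route as the paper: establish uniform closeness of $\nabla \hat{L}$ and $\nabla^2 \hat{L}$ to their population counterparts on the region $\{\norm{a_i}\le 2\}$, rule out critical points with large row norms via the regularizer (the paper cites Lemma E.5 of \cite{ge2017learning} for this), and then transfer the quantitative conclusions of \prettyref{thm:landscape}. The only cosmetic difference is in how the uniform concentration is executed: where you propose a direct $\varepsilon$-net plus tensor-Bernstein argument, the paper first truncates the loss to the event $\{\norm{x}\le R\}$ with $R = Cd\log(1/\varepsilon)$, bounds the truncation error using the exponential-tail and polynomial-growth conditions of \prettyref{assump:finiteassump}, and invokes the uniform convergence theorem of \cite{ge2017learning} for the bounded truncated loss.
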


A major bottleneck in applying the proposed loss \prettyref{eq:lossfuncdef} directly to real data is that the knowledge of 
the probability density  function of the data $x$ is required. 
As we saw in the Examples \ref{examp:gaussian} and \ref{examp:mgg}, 
the loss function $t_1$ and $t_2$ depends on the pdf of $x$. 
In the next section, we show how we can combine the 
LLSFE  to compute (the gradients of) those functions to introduce a novel consistent estimator with a desirable landscape. 

\section{Experiments}
\label{sec:experiment}

\subsection{Landscape of $L(\cdot)$}
\label{subsec:landscape}

In this  simulation, we show that the landscape of the loss function $L(A)$ is well-behaved, if we know the score function $\mathcal{S}_4(x)$. 
We choose $x = (x_1, \dots, x_d)$, where $x_i$ are i.i.d. symmetric exponential distributed random variables, i.e., $f(x_i) = (1/2)\exp\{-|x_i|\}$. The fourth-order score function is given by $\mathcal{S}_4(x) = {\rm sgn}(x)^{\otimes 4}$. We compare our loss function $L(A)$ with an $\ell_2$-loss, $\ell(\cdot)$, as well as the loss function $G(\cdot)$ proposed in~\citep{ge2017learning}, and evaluate the performance through the parameter error (which verifies if $A^{* -1}A$ is close to a permutation matrix)
\begin{eqnarray}
e(A) &=& \min\{1-\min_i \max_j |(A^{* -1}A)_{ij}|,  1-\min_j \max_i |(A^{* -1}A)_{ij}|\}.
\end{eqnarray}
For the experiment, we choose $A^* = I_d$, $w^* = 1$, $\sigma = {\rm ReLU}$, $k = d = 50$ and use full-batch gradient descent with sample size 8192 and learning rate $\eta = 5\times 10^{-3}$ for $\ell_2$ loss and $\eta = 5 \times 10^{-5}$ for $L(A)$ and $G(A)$. Regularization parameter is $\mu = 30$ for both $L(A)$ and $G(A)$.
The results are illustrated in Figure~\ref{fig:exponential}, which shows that 
$(i)$ $\ell_2(\cdot)$ converges slowly and to a suboptimal critical point indicating the existence of local minima; 
$(ii)$ $G(\cdot)$ converges to a suboptimal critical point due to the mismatched Gaussian assumption; and 
$(iii)$ $L(\cdot)$ converges to a global minima.

\begin{figure}[h]
    \centering
    \includegraphics[width=0.42\textwidth]{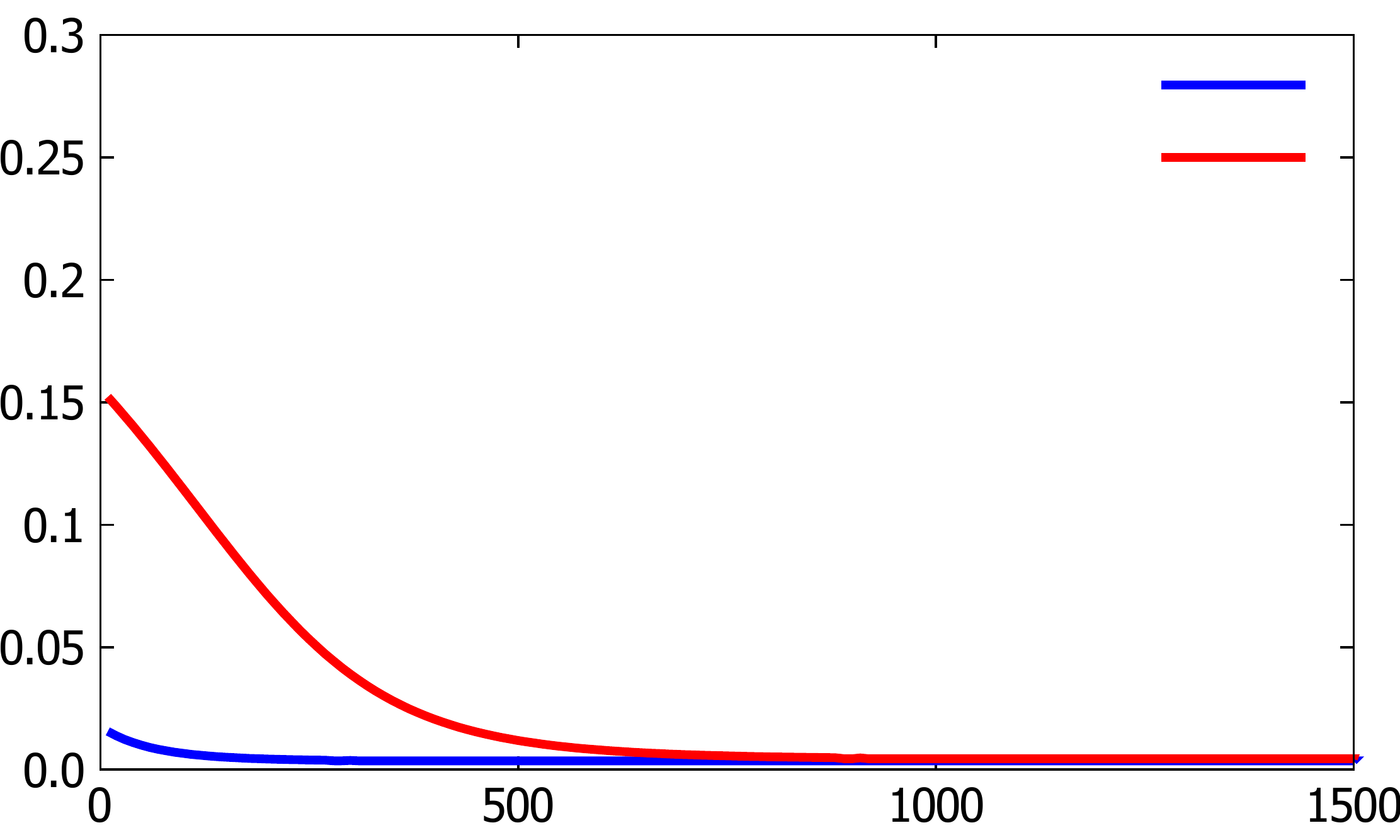}
    \put(-105,-10){Iterations}
    \put(-210,25){\rotatebox{90}{Parameter Error}}
    \put(-55,92){$\widehat{L}(\cdot)$}
    \put(-56,102){$G(\cdot)$}
    \hspace{0.0in}
    \includegraphics[width=0.42\textwidth]{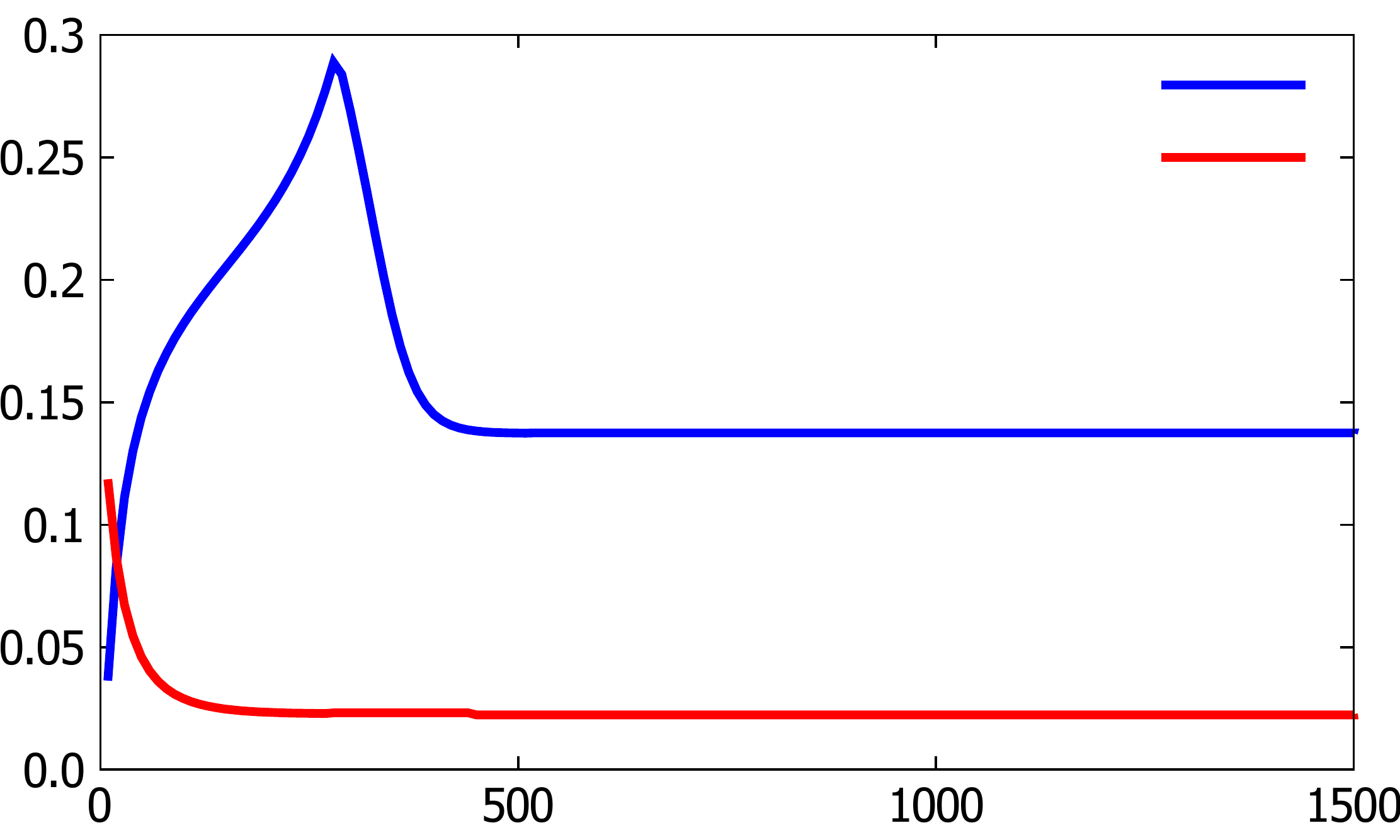}
    \put(-110,-10){Iterations}
    \put(-55,92){$\widehat{L}(\cdot)$}
    \put(-56,102){$G(\cdot)$}
    \caption{Learning curve of objective function $G(A)$ (blue line) and LLSFE based objective function $L(A)$~\eqref{eq:lossfuncdef} (red line). Top: $x$ is Gaussian. Bottom: $x$ is Gaussian-mixture.}
    \label{fig:training}
\end{figure}

\subsection{Combine with LLSFE}
\label{subsec:training}

Now we use our estimator LLSFE to construct the empirical loss $\widehat{L}(A)$ to train a one-hidden-layer neural network~\eqref{eq:model}.
The setting of this experiment is same as that of Subsection~\ref{subsec:landscape} with $k=d=2$ for simplicity.

In the left panel of Figure.~\ref{fig:training}, we choose Gaussian input $x \sim \mathcal{N}(0, I_d)$ so that the loss $G(A)$ coincides with $L(A)$ if the ground truth $\mathcal{S}_4(x)$ is known. We can see that using estimation error using $L(\cdot)$ operates close to that of the ground-truth $G(\cdot)$
In the right panel of Figure~\ref{fig:training}, we choose $x \sim  0.5 \mathcal{N}({\bf 1}_d, I_d) + 0.5 \mathcal{N}(-{\bf 1}_d, I_d)$. In this case, $G(A)$ converges to a local minimum, thus incurring higher parameter error, whereas LLSFE-based objective function converges to the global minima very quickly. 
This confirms that when the data is not coming from a Gaussian distribution, it is critical to use properly matched estimator, 
which is provided by the proposed LLSFE approach. 

\section{Conclusion}
\label{sec:conclusion}

Stochastic gradient descent is the dominant method for training neural networks. 
As SGD on the standard $\ell_2$ loss fails to converge to the true parameters of the ``teacher'' networks, from which the data is generated, 
there have been significant efforts to  design 
a loss function with a good landscape. 
However, those new loss functions are typically tailored only for Gaussian distributions; 
a common assumption in theory of neural networks, but far from the real data.   

To bridge this gap, we propose a new framework for designing the landscape for general smooth distributions.
Using local likelihood density estimators, which can capture the local geometry of the probability density function, 
we introduce a novel estimator for {\em score functions} which $(i)$ involve higher-order derivatives of the input pdf and $(ii)$ are critical in the landscape design. 
This resolves one of the challenges in generalizing 
the Gaussian assumption, namely score function estimation. 

There are other challenges in removing the Gaussian assumption in the analysis of more complicated networks, for example in \cite{brutzkus2017globally,li2017convergence}. 
Innovative analysis techniques are needed to complete the generalization of the Gaussian assumption, which is a promising direction for future research. Also, the time complexity of our approach is polynomial of the dimension of input, but the exact order of is unknown. Further improvement on time complexity could be a promising future research direction.

\section*{Acknowledgement}
This work is partially supported by NSF grant 1815535, NSF CNS-1718270 and the Army Research Office under grant W911NF1810332,

\bibliographystyle{apalike}
\bibliography{reference}

\balance
\clearpage

\appendix

\section{Proof of Section~\ref{sec:estimation}}

\subsection{Proof of Theorem~\ref{thm:theorem_1}}

\begin{proof}
We rewrite the spectral norm error in terms of the polynomial representations~\eqref{eq:S_poly} and~\eqref{eq:hat_S_poly} as
\begin{eqnarray}
&&\| \widehat{\mathcal{S}_m}(x) - \mathcal{S}_m(x) \|_{\rm sp} \,\notag\\
&\leq& \sum_{\lambda \in \Lambda_m} c_m(\lambda) \, \| {\rm sym}(\bigotimes_{j \in \lambda} \widehat{\mathcal{A}_j^{(p)}}) - {\rm sym}(\bigotimes_{j \in \lambda} \mathcal{G}_j)\|_{\rm sp} \,\notag\\
&\leq& \sum_{\lambda \in \Lambda_m} c_m(\lambda) \, \| \bigotimes_{j \in \lambda} \widehat{\mathcal{A}_j^{(p)}} - \bigotimes_{j \in \lambda} \mathcal{G}_j\|_{\rm sp} \;,
\label{eq:eq_1}
\end{eqnarray}
where the last inequality comes from the fact that $\|{\rm sym}(\mathcal{T})\|_{\rm sp} \leq \|\mathcal{T}\|_{\rm sp}$. Then we study each term in~\eqref{eq:eq_1}. For simplicity of notation, denote the estimation error $\mathcal{E}_j^{(p)} \triangleq \widehat{\mathcal{A}_j^{(p)}} - \mathcal{G}_j$, then we have
\begin{eqnarray}
&&\| \bigotimes_{j \in \lambda} \widehat{\mathcal{A}_j^{(p)}} - \bigotimes_{j \in \lambda} \mathcal{G}_j\|_{\rm sp} \,\notag\\
&=& \| \bigotimes_{j \in \lambda} ( \mathcal{E}_j^{(p)} + \mathcal{G}_j ) - \bigotimes_{j \in \lambda} \mathcal{G}_j\|_{\rm sp} \,\notag\\
&=& \| \sum_{\nu \subset \lambda} \left( \left( \bigotimes_{j \in \nu}(\mathcal{E}_j^{(p)})\right) \otimes \left( \bigotimes_{j \in \lambda \setminus \nu} \mathcal{G}_j\right) \right) -  \bigotimes_{j \in \lambda} \mathcal{G}_j \|_{\rm sp} \,\notag\\
&=& \| \sum_{\nu \subset \lambda, \nu \neq \emptyset} \left( \left( \bigotimes_{j \in \nu}(\mathcal{E}_j^{(p)})\right) \otimes \left( \bigotimes_{j \in \lambda \setminus \nu} \mathcal{G}_j\right) \right) \|_{\rm sp} \,\notag\\
&\leq& \sum_{\nu \subset \lambda, \nu \neq \emptyset} \|\left( \bigotimes_{j \in \nu}(\mathcal{E}_j^{(p)})\right) \otimes \left( \bigotimes_{j \in \lambda \setminus \nu} \mathcal{G}_j\right) \|_{\rm sp} \,\notag\\
&\leq& \sum_{\nu \subset \lambda, \nu \neq \emptyset} \left(\, ( \prod_{j \in \nu} \|\mathcal{E}_j^{(p)} \|_{\rm sp}) \times ( \prod_{j \in \lambda \setminus \nu} \| \mathcal{G}_j\|_{\rm sp} )\,\right) \;.\notag\\
\label{eq:eq_2}
\end{eqnarray}

%

Now we study the spectral norm of $\mathcal{E}_j^{(p)}$, which can be upper bounded by the Frobenius norm. Then by Lemma~\ref{lem:lemma_1}, we have,
\begin{eqnarray}
&& \|\mathcal{E}_j^{(p)}\|_{\rm sp} \leq \|\mathcal{E}_j^{(p)}\|_{\mathcal{F}} = \sqrt{\sum_{i_1, \dots, i_j} \left( \mathcal{E}_j^{(p)} \right)_{(i_1, \dots, i_j)}^2} \,\notag\\
&=& O(d^{j/2}h^{p+1-j}) + O_p(d^{j/2}(nh^{d+2j})^{-1/2}).
\end{eqnarray}

Since for any $j \leq m$, we have $h^{p+1-j} \to 0$ and $nh^{d+2j} \to \infty$ as $n \to \infty$. So for sufficiently large $n$, we have $\sum_{j \in \lambda} \|\mathcal{E}_j^{(p)}\|_{\rm sp} \leq 1$ with high probability. Then, plug it into~\eqref{eq:eq_2}, we get
\begin{eqnarray}
&&\| \bigotimes_{j \in \lambda} \widehat{\mathcal{A}_j^{(p)}} - \bigotimes_{j \in \lambda} \mathcal{G}_j\|_{\rm sp} \,\notag\\
&\leq& \sum_{\nu \subset \lambda, \nu \neq \emptyset} \left(\, ( \prod_{j \in \nu} \|\mathcal{E}_j^{(p)} \|_{\rm sp}) \times \prod_{j \in \lambda \setminus \nu} C_j \,\right) \,\notag\\
&\leq& C \sum_{\nu \subset \lambda, \nu \neq \emptyset}  \prod_{j \in \nu} \|\mathcal{E}_j^{(p)} \|_{\rm sp} \,\notag\\
&=& C \left( \prod_{j \in \lambda} (1+\|\mathcal{E}_j^{(p)} \|_{\rm sp}) - 1 \right) \,\notag\\
&\leq& C \left( \exp\{\sum_{j \in \lambda} \|\mathcal{E}_j^{(p)}\|_{\rm sp}\} - 1 \right) \,\notag\\
&\leq& 2 C \sum_{j \in \lambda} \|\mathcal{E}_j^{(p)}\|_{\rm sp} \,\notag\\
&=& O(d^{j_{\rm max}/2}h^{p+1-j_{\rm max}}) + O_p(d^{j_{\rm max}/2}(nh^{d+2j_{\rm max}})^{-1/2})
\end{eqnarray}
here constant $C = \max_{\nu} \prod_{j \in \lambda \setminus \nu} C_j$ and $j_{\rm max} = \max\{j: j \in \lambda\}$. The last inequality comes from the fact that $e^y - 1\leq 2y$ for any $y \leq 1$. Since $\lambda$ is a partition of integer $m$, we have $j_{\rm max} \leq m$, and the equation holds if and only if $\lambda = \{m\}$. Therefore the only term in~\eqref{eq:eq_1} that achieves $O(d^{m/2}h^{p+1-m}) + O_p(d^{m/2}(nh^{d+2m})^{-1/2})$ is $\|\widehat{\mathcal{A}}_m^{(p)} - \mathcal{G}_m\|_{\rm sp}$, with $c_m(\lambda) = 1$. Therefore, we complete the proof.
\end{proof}

\section{Proofs of \prettyref{sec:landscape}}
\label{app:sec2proof}

The key technical lemma behind our results is the Stein's lemma and its generalizations which we present below.
\begin{lemma}[\cite{stein1972bound}]
Let $x \sim \calN(0,I_d)$ and $g:\reals^d \to \reals$ be such that both $\Expect[\nabla g(x)]$ and $\Expect[g(x)x]$ exist and are finite. Then
\begin{eqnarray}
\Expect[g(x)x]=\Expect[\nabla_x g(x)].
\end{eqnarray}
\end{lemma}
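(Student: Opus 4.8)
\emph{Proof proposal.} The plan is to reduce the claim to a one-dimensional integration by parts that exploits the defining algebraic identity of the standard Gaussian density. Write $\phi(x) = (2\pi)^{-d/2}\exp(-\norm{x}^2/2)$ for the density of $\calN(0,I_d)$, and observe that $\nabla_x \phi(x) = -x\,\phi(x)$, equivalently $x_i\,\phi(x) = -\partial_{x_i}\phi(x)$ for each coordinate $i$. Because both sides of the asserted identity are vectors in $\reals^d$, it suffices to prove the scalar statement $\Expect[g(x)\,x_i] = \Expect[\partial_{x_i} g(x)]$ for every $i$ and then stack the $d$ resulting equations.

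Fixing $i$, I would invoke Fubini's theorem to write the expectation as an iterated integral, integrating first over $x_i$ with the remaining coordinates frozen. The inner integral has the form $\int_{\reals} g(x)\,x_i\,\phi_1(x_i)\,dx_i$ (times the product of the one-dimensional Gaussian densities in the other coordinates), where $\phi_1$ is the scalar standard Gaussian density; substituting $x_i\,\phi_1(x_i) = -\phi_1'(x_i)$ and integrating by parts gives $\int_{\reals} g\,(-\phi_1')\,dx_i = \big[-g\,\phi_1\big]_{-\infty}^{\infty} + \int_{\reals} (\partial_{x_i} g)\,\phi_1\,dx_i$. The boundary term vanishes because $\phi_1$ decays faster than any polynomial while the hypotheses that $\Expect[g(x)x]$ and $\Expect[\nabla g(x)]$ are finite force $g\,\phi_1 \to 0$ along almost every coordinate line; reassembling via Fubini yields $\Expect[g(x)x_i] = \Expect[\partial_{x_i} g(x)]$, and stacking over $i$ completes the argument.

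The main technical obstacle is making the integration by parts rigorous under only the stated moment hypotheses: one must justify that $g$ is absolutely continuous on almost every coordinate line (so the fundamental theorem of calculus applies) and that the boundary terms truly vanish rather than merely being finite. I would address this by first proving the identity for $g \in C^1$ with suitably controlled derivative, and then extending to the general case by a mollification argument, approximating $g$ in $L^1(\phi\,dx)$ by smooth functions whose gradients also converge in $L^1(\phi\,dx)$ and passing to the limit; alternatively one can appeal to the standard characterization of weighted-Sobolev functions against a Gaussian measure. The remaining steps — the Fubini reduction, the substitution $x_i\phi = -\partial_{x_i}\phi$, and the coordinatewise stacking — are routine.
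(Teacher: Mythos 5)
The paper does not prove this lemma at all: it is quoted verbatim as a known result of Stein, cited to \cite{stein1972bound}, and used as a black box (its only role is to motivate the generalized version, Lemma~\ref{lmm:generalstein}, which is likewise cited rather than proved). So there is no in-paper proof to compare against; your proposal has to be judged on its own. On those terms it is correct and is the standard argument: reduce to the coordinatewise identity $\Expect[g(x)x_i]=\Expect[\partial_{x_i}g(x)]$, use $x_i\phi_1(x_i)=-\phi_1'(x_i)$, integrate by parts along the $x_i$-line, and kill the boundary term. The one place where your write-up is looser than it should be is the claim that the finiteness hypotheses ``force $g\,\phi_1\to 0$'' at the endpoints. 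That is true but not immediate; the clean way to get it is to note that for almost every fixed value of the other coordinates,
\begin{equation*}
g(b)\phi_1(b)-g(a)\phi_1(a)=\int_a^b\bigl(\partial_{x_i}g(t)-t\,g(t)\bigr)\phi_1(t)\,dt,
\end{equation*}
whose integrand is integrable over $\reals$ by the two stated hypotheses, so the limits of $g\phi_1$ at $\pm\infty$ exist; and a nonzero limit would make $|g(x_i)x_i|\phi_1(x_i)\sim c|x_i|$ non-integrable, contradicting the finiteness of $\Expect[g(x)x]$. With that step made explicit (and the absolute-continuity-on-lines caveat you already flag, which is genuinely needed since the lemma as stated is silent about the regularity of $g$), the argument is complete.
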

The following lemma generalizes Stein's lemma to more general distributions and higher-order derivatives.
\begin{lemma}[\cite{sedghi2014provable}]
\label{lmm:generalstein}
Let $m \geq 1$ and $\calS_m(x)$ be defined as in \prettyref{eq:defscore}. Then for any $g:\reals^d \to \reals$ satisfying some regularity conditions, we have
\begin{eqnarray}
\Expect[g(x)\cdot \calS_m(x)]=\Expect[\nabla_x^{(m)}g(x)].
\end{eqnarray}
\end{lemma}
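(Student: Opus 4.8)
The plan is to prove the identity one tensor-entry at a time and reduce it, via $m$-fold integration by parts, to the elementary one-variable fact $\int_\reals u(t)\,v'(t)\,dt = -\int_\reals u'(t)\,v(t)\,dt$ for functions whose product vanishes at $\pm\infty$. Recall from \prettyref{sec:estimation} that $\calS_1(x) = -\nabla_x\log f_X(x)$ and $\calS_m(x) = -\calS_{m-1}(x)\otimes\nabla_x\log f_X(x) - \nabla_x\calS_{m-1}(x)$, from which one checks by induction that $\calS_m(x) = (-1)^m\,\nabla_x^{(m)} f_X(x)/f_X(x)$. Fixing a multi-index $(i_1,\dots,i_m)\in[d]^m$, the left-hand side of the claimed identity therefore has $(i_1,\dots,i_m)$-entry equal to
\[
\Expect\!\big[g(x)\,(\calS_m(x))_{(i_1,\dots,i_m)}\big] \;=\; (-1)^m\!\int_{\reals^d} g(x)\,\partial_{i_1}\!\cdots\partial_{i_m} f_X(x)\,dx ,
\]
and it suffices to show this equals $\int_{\reals^d} \partial_{i_1}\!\cdots\partial_{i_m} g(x)\, f_X(x)\,dx = \Expect[(\nabla_x^{(m)} g(x))_{(i_1,\dots,i_m)}]$.

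I would prove this by induction on $m$. For $m=1$, writing the $\reals^d$-integral as an iterated integral and integrating by parts in the $x_{i_1}$-direction gives $\int g\,\partial_{i_1} f_X = -\int (\partial_{i_1}g)\,f_X$, provided the boundary term $\big[g f_X\big]_{x_{i_1}=-\infty}^{x_{i_1}=+\infty}$ integrates to $0$ over the remaining coordinates. For the inductive step, apply the order-$(m-1)$ statement to the function $\partial_{i_1}g$ and perform one further integration by parts in the $x_{i_1}$-direction to shift a derivative from $f_X$ onto $g$; the $m$ sign changes produced along the way cancel the prefactor $(-1)^m$, yielding the entrywise identity. Pairing this identity with arbitrary unit vectors in each tensor mode recovers the full statement $\Expect[g(x)\cdot\calS_m(x)] = \Expect[\nabla_x^{(m)} g(x)]$; the $m=1$ case is precisely the classical Stein/divergence identity, and indeed the whole argument can alternatively be organized directly off the recursion for $\calS_m$ together with the product rule.

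The only real content — exactly what ``satisfying some regularity conditions'' is meant to encode — is the vanishing of the boundary term at each of the $m$ integration-by-parts steps, and this is where I expect the (mild) technical work to sit. A clean sufficient set of hypotheses is: $f_X\in C^m(\reals^d)$ and $g\in C^m(\reals^d)$; every integral $\int_{\reals^d} |\partial^\alpha g(x)|\,|\partial^\beta f_X(x)|\,dx$ with $|\alpha|+|\beta|\le m$ is finite (so Fubini applies and all the integrals above are well defined); and for all $0\le j\le m-1$ and multi-indices $\alpha,\beta$ with $|\alpha|=j$, $|\beta|=m-1-j$, one has $\partial^\alpha g(x)\,\partial^\beta f_X(x)\to 0$ as $\norm{x}\to\infty$ fast enough that the surface integrals over $\{|x_i|=R\}$ vanish as $R\to\infty$. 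Under these conditions every boundary term drops and the iterated integration by parts goes through verbatim. I would close by noting that this holds in the smooth-density, polynomial-growth-test-function settings relevant to \prettyref{sec:landscape}: there $f_X$ is a smooth, super-polynomially decaying density (a Gaussian, or a Gaussian mixture as in Example~\ref{examp:mgg}) and the relevant $g$ has at most polynomial growth, so each product $\partial^\alpha g\cdot\partial^\beta f_X$ decays faster than any polynomial.
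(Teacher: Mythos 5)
The paper does not actually prove \prettyref{lmm:generalstein}; it imports the statement from \cite{sedghi2014provable} (the recursion it relies on is likewise quoted from \cite{janzamin2014score}), so there is no in-paper argument to compare yours against. Your proposal is the standard proof --- reduce to an entrywise identity and integrate by parts $m$ times, with the ``regularity conditions'' cashed out as integrability of the mixed products plus vanishing of the boundary terms --- and it is correct as written. One point worth making explicit: your induction from the recursion gives $\calS_m(x)=(-1)^m\,\nabla^{(m)}f_X(x)/f_X(x)$, and that is the sign convention under which $\Expect[g(x)\cdot\calS_m(x)]=\Expect[\nabla_x^{(m)}g(x)]$ actually holds, whereas the paper's own definition \prettyref{eq:defscore} omits the $(-1)^m$. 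The discrepancy is harmless in this paper because the lemma is only invoked at $m=4$ (and the displayed examples use $m=2$ and $m=4$), where $(-1)^m=1$, but your version is the one consistent with the cited literature and with the lemma as stated for odd $m$.
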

The following theorem gives an alternate characterization of the loss function $L$ and is the key step in the proof of \prettyref{thm:landscape}.
\begin{theorem}
\label{thm:alternate}
The loss function $L(\cdot)$ defined in \prettyref{eq:lossfuncdef} satisfies that
\begin{eqnarray}
L(A) &=& \sum_{i \in [d]} w_i^\ast \Expect[g^{(4)}(\inner{a_i^\ast}{x})] \sum_{j,k \in [d], j\neq k} \inner{a_i^\ast}{a_j}^2 \inner{a_i^\ast}{a_k}^2 \,\notag\\
&&\;-\mu \sum_{i,j \in [d]} w_i^\ast \Expect[g^{(4)}(\inner{a_i^\ast}{x})]\inner{a_i^\ast}{a_j}^4 +\lambda \sum_{i \in [d]}(\norm{a_i}-1)^2 \;.
\end{eqnarray}
\end{theorem}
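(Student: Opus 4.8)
The plan is to reduce everything to the generalized Stein's lemma (\prettyref{lmm:generalstein}), which converts expectations taken against a score function into expectations of derivatives. Two observations come for free. First, the regularization term $\lambda \sum_{i \in [d]} (\norm{a_i}-1)^2$ appears verbatim on both sides, so it may be ignored. Second, writing the data as $y = \sum_{l \in [d]} w_l^\ast g(\inner{a_l^\ast}{x}) + \eta$ as in \prettyref{eq:model}, the noise $\eta$ is zero-mean and independent of $x$, so for any fixed vectors $u,v$ we have $\Expect[\eta \cdot \calS_4(x)(u,u,v,v)] = \Expect[\eta]\,\Expect[\calS_4(x)(u,u,v,v)] = 0$, and likewise for the $t_2$ contraction; hence $y$ may be replaced by $\sum_{l} w_l^\ast g(\inner{a_l^\ast}{x})$ inside every expectation in \prettyref{eq:lossfuncdef}.

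Next I would handle the $t_1$ terms. Fix indices $i \neq j$ and a teacher index $l$. Applying \prettyref{lmm:generalstein} with $m=4$ to the scalar function $x \mapsto g(\inner{a_l^\ast}{x})$ and contracting the resulting tensor identity against $(a_i,a_i,a_j,a_j)$ gives
\[
\Expect\!\left[ g(\inner{a_l^\ast}{x})\, \calS_4(x)(a_i,a_i,a_j,a_j) \right] \;=\; \Expect\!\left[ \nabla_x^{(4)}\, g(\inner{a_l^\ast}{x})\,(a_i,a_i,a_j,a_j) \right].
\]
By the chain rule, $\nabla_x^{(4)} g(\inner{a_l^\ast}{x}) = g^{(4)}(\inner{a_l^\ast}{x})\,(a_l^\ast)^{\otimes 4}$, so the right-hand contraction equals $g^{(4)}(\inner{a_l^\ast}{x})\,\inner{a_l^\ast}{a_i}^2\,\inner{a_l^\ast}{a_j}^2$. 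Summing over $l$ yields $\Expect[y \cdot t_1(x,a_i,a_j)] = \sum_{l} w_l^\ast\, \Expect[g^{(4)}(\inner{a_l^\ast}{x})]\,\inner{a_l^\ast}{a_i}^2\,\inner{a_l^\ast}{a_j}^2$; then summing over $i \neq j$ and relabeling $(l,i,j) \to (i,j,k)$ produces exactly the first term of the claimed identity. The $t_2$ terms are identical with $(a_i,a_i,a_j,a_j)$ replaced by $(a_i,a_i,a_i,a_i)$, giving $\Expect[y \cdot t_2(x,a_i)] = \sum_{l} w_l^\ast\, \Expect[g^{(4)}(\inner{a_l^\ast}{x})]\,\inner{a_l^\ast}{a_i}^4$; summing over $i$, relabeling, and multiplying by $-\mu$ gives the second term. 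Adding back the regularization completes the proof.

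The computation is essentially routine tensor bookkeeping, so the only genuine content — and the place I expect to have to be careful — is justifying the use of \prettyref{lmm:generalstein}, which holds only under regularity conditions on $g$ and on the density $f_X$ (enough integrability and decay at infinity that the integration-by-parts behind the generalized Stein identity produces no boundary terms). I would either make these conditions explicit at the point of use or absorb them into a standing assumption. I would also note that, in contrast to the intuition sketch around \prettyref{eq:temp}, this identity uses neither orthogonality of the $a_i^\ast$'s nor $k=d$; the $[d]$-indexing is simply the specialization that feeds into \prettyref{thm:landscape}.
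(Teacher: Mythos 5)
Your proposal is correct and follows essentially the same route as the paper: drop the noise term using independence and zero mean, apply the generalized Stein's lemma (\prettyref{lmm:generalstein}) with $m=4$ to get $\Expect[y\cdot\calS_4(x)]=\sum_i w_i^\ast\,\Expect[g^{(4)}(\inner{a_i^\ast}{x})](a_i^\ast)^{\otimes 4}$ via the chain rule, then contract against $(a_j,a_j,a_k,a_k)$ and $(a_j,a_j,a_j,a_j)$ and sum. Your added remarks on the regularity conditions behind the Stein identity and on the fact that neither orthogonality nor $k=d$ is used are accurate but do not change the argument, which matches the paper's.
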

\begin{proof}
Since $\eta$ is zero-mean and independent of $x$, we have that
\begin{align}
\Expect[y \cdot \calS_m(x)]= \sum_{i \in [k]} w_i^\ast \Expect[g(\inner{a_i^\ast}{x}\cdot \calS_m(x))] \;,
\label{eq:hmm}
\end{align}
\end{proof}
Putting $m=4$ in \prettyref{lmm:generalstein}, in view of \prettyref{eq:hmm}, we obtain that
\begin{eqnarray}
\Expect[y \cdot \calS_4(x)]=  \sum_{i \in [k]} w_i^\ast \Expect[g^{(4)}(\inner{a_i^\ast}{x})] (a_i^\ast)^{\otimes 4}.
\end{eqnarray}
Thus for any fixed $a_j,a_k$, we have
\begin{eqnarray}
&&\Expect[y \cdot \calS_4(x)(a_j,a_j,a_k,a_k)]=\Expect[y \cdot t_1(x)] \,\notag\\
&=&\sum_{i \in [k]} w_i^\ast \Expect[g^{(4)}(\inner{a_i^\ast}{x})] \inner{a_i^\ast}{a_j}^2 \inner{a_i^\ast}{a_k}^2,\\
&&\Expect[y \cdot \calS_4(x)(a_j,a_j,a_j,a_j)]=\Expect[y \cdot t_2(x)] \,\notag\\
&=&\sum_{i \in [k]} w_i^\ast \Expect[g^{(4)}(\inner{a_i^\ast}{x})] \inner{a_i^\ast}{a_j}^4.
\end{eqnarray}
Now summing over $j,k$ finishes the proof.

\subsection{Proof of \prettyref{thm:landscape}}
\begin{proof}
The proof directly follows from Theorem 2.3 of \cite{ge2017learning} and \prettyref{thm:alternate}.
\end{proof}

\subsection{Proof of \prettyref{thm:finitesample}}
\label{sec:finiteproof}

We formally state our assumptions for the finite sample landscape analysis below.

\begin{assumption}
\label{assump:finiteassump}
\begin{enumerate}
\item[(a)] $\norm{x}$ has exponentially decaying tails, \ie
\begin{eqnarray}
\prob{\norm{x}^2 \geq t} \leq K_1 e^{-K_2 t^2}, \quad \forall t \geq 0,
\end{eqnarray}
for some constants $K_1,K_2 >0$.
\item[(b)] Let $l(x,y,A)$ be such that $L(A)=\Expect[l(x,y,A)]+\lambda \sum_{i \in [k]}(\norm{a_i}^2-1)^2$. Then there exists a constant $K>0$ which is at most a polynomial in $d$ and a constant $p \in \naturals$ such that
\begin{eqnarray}
\norm{\nabla_A l(x,y,A)} &\leq& K \norm{x}^p, \notag\\
 \norm{\nabla^2_A l(x,y,A)} &\leq& K \norm{x}^p,
\end{eqnarray}
for all $A$ such that $\norm{A_i} \leq 2$.
\end{enumerate}
\end{assumption}

In order to establish that the gradient and the Hessian of $L$ are close to their finite sample counterparts, we first consider its truncated version $L_T$ defined as
\begin{align}
L_T \define \Expect[l(x,y,A) \mathbbm{1}_E], \quad E \define \{\norm{x} \leq R  \},
\label{eq:trunc}
\end{align}
where $R=Cd \log(1/\varepsilon)$ for some $\varepsilon <0$. It follows that $L_T$ is well behaved and exhibits uniform convergence of empirical gradients/Hessians to its population version~\cite{ge2017learning} for $A$ with bounded norm. Then \prettyref{thm:finitesample} follows from showing that the gradient and the Hessian of $L_T$ are close to that of $L$ as well in this setting, which we prove in \prettyref{lmm:truncclose}. Next we combine this result with Lemma E.5 of \cite{ge2017learning} which shows that $A$ with large row norms must also have large gradients and hence cannot be local minima. First we define $L_T$
\begin{lemma}
\label{lmm:truncclose}
Let $L_T$ be defined as in \prettyref{eq:trunc} and \prettyref{assump:finiteassump} hold. Then for a sufficiently large constant $C$ and a sufficiently small $\varepsilon>0$, we have that
\begin{eqnarray}
\norm{\nabla L(A)- \nabla L_T(A)}_2 \leq \varepsilon,\\
\norm{\nabla^2 L(A)- \nabla^2 L_T(A)}_2 \leq \varepsilon.
\end{eqnarray}
for all $A$ with row norm $\|A_i\| \leq 2$.
\end{lemma}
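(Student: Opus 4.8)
The plan is to use the fact that $L(\cdot)$ and $L_T(\cdot)$ differ only through the contribution of the tail event $E^c = \{\norm{x} > R\}$ (the deterministic regularization term $\lambda\sum_i(\norm{a_i}^2-1)^2$ being understood to appear identically in both, so it cancels in the difference), and to show that this contribution is negligible because the integrand grows only polynomially in $\norm{x}$ while $\prob{\norm{x}>R}$ decays faster than any polynomial once $R = Cd\log(1/\varepsilon)$. First I would write, for $A$ with $\norm{A_i}\le 2$,
\begin{align*}
\nabla L(A) - \nabla L_T(A) &= \Expect[\nabla_A l(x,y,A)\,\mathbbm{1}_{E^c}], \\
\nabla^2 L(A) - \nabla^2 L_T(A) &= \Expect[\nabla^2_A l(x,y,A)\,\mathbbm{1}_{E^c}],
\end{align*}
and bound the operator norms by moving the norm inside the expectation via Jensen's inequality, so that it suffices to bound $\Expect[\norm{\nabla_A l(x,y,A)}\,\mathbbm{1}_{E^c}]$ and the analogous Hessian quantity.

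By \prettyref{assump:finiteassump}(b) both of these are at most $K\,\Expect[\norm{x}^p\,\mathbbm{1}_{\norm{x}>R}]$. I would control this truncated moment either by Cauchy--Schwarz, $\Expect[\norm{x}^p\,\mathbbm{1}_{\norm{x}>R}] \le \sqrt{\Expect[\norm{x}^{2p}]}\cdot\prob{\norm{x}>R}^{1/2}$, or equivalently by the layer-cake identity $\Expect[\norm{x}^p\,\mathbbm{1}_{\norm{x}>R}] = R^p\,\prob{\norm{x}>R} + \int_R^\infty p\,s^{p-1}\prob{\norm{x}>s}\,ds$. Here $\Expect[\norm{x}^{2p}]$ is finite --- at most a constant depending only on $p,K_1,K_2$ --- since integrating the tail bound in \prettyref{assump:finiteassump}(a) controls all moments of $\norm{x}^2$; and $\prob{\norm{x}>R} = \prob{\norm{x}^2 > R^2} \le K_1 e^{-K_2 R^4}$.

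Finally I would substitute $R = Cd\log(1/\varepsilon)$, which gives $\Expect[\norm{x}^p\,\mathbbm{1}_{\norm{x}>R}] \le \mathrm{poly}(d,\log(1/\varepsilon))\cdot e^{-K_2 C^4 d^4 \log^4(1/\varepsilon)/2}$, and multiply by $K$, which is at most polynomial in $d$ by \prettyref{assump:finiteassump}(b). Since the exponential factor dominates every polynomial prefactor once $C$ is a sufficiently large absolute constant (depending only on $K,p,K_1,K_2$) and $\varepsilon$ is sufficiently small, the resulting bound is at most $\varepsilon$. The identical chain of inequalities bounds the Hessian difference, completing the proof.

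The main obstacle is purely one of bookkeeping rather than anything conceptual: verifying that the polynomial-in-$d$ prefactors (the constant $K$, the $2p$-th moment of $\norm{x}$, and the $R^p$ term) never overwhelm the exponential tail decay. This works precisely because the truncation radius $R$ itself scales linearly in $d$, so the exponent of the tail bound carries a $d^4$ (under the stated assumption) or at least a $d$ factor. One should also note that if \prettyref{assump:finiteassump}(a) is intended in the weaker sub-Gaussian form $\prob{\norm{x}^2 \ge t}\le K_1 e^{-K_2 t}$, the same argument still goes through with $e^{-K_2 R^2}$ in place of $e^{-K_2 R^4}$, which remains more than sufficient.
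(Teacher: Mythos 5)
Your proof is correct and follows essentially the same route as the paper's: both reduce the difference to the tail expectation $\Expect\left[\norm{\nabla l(x,y,A)}\mathbbm{1}\{\norm{x}\geq R\}\right]$ via Jensen's inequality, invoke the polynomial growth bound of \prettyref{assump:finiteassump}(b), and then exploit the exponential tail of \prettyref{assump:finiteassump}(a) with $R=Cd\log(1/\varepsilon)$ to make the result smaller than $\varepsilon$. The only (cosmetic) difference is that you control the truncated moment $\Expect\left[\norm{x}^p\mathbbm{1}_{\norm{x}>R}\right]$ by Cauchy--Schwarz or the layer-cake formula, whereas the paper uses a dyadic peeling over shells $\norm{x}\in[2^iR,2^{i+1}R]$; both are routine and equally valid.
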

\begin{proof}
We have that
\begin{eqnarray}
&&\norm{\nabla L(A)- \nabla L_T(A)}_2 \,\notag\\
&=& \norm{\Expect[\nabla l(x,y,A)(1-\mathbbm{1}_E)]} \,\notag\\
& \stackrel{(a)} \leq& \Expect[\norm{\nabla l(x,y,A) } \mathbbm{1}\{\norm{x} \geq R  \}  ] \,\notag\\
&=& \sum_{i\geq 0} \Expect[\norm{\nabla l(x,y,A) } \mathbbm{1}\{\norm{x} \in [2^i R, 2^{i+1}R]  \}  ] \,\notag\\
& \stackrel{(b)} \leq& \sum_{i \geq 0} K (2^{i+1}R)^p \prob{\norm{x} \geq 2^i R } \,\notag\\
& \leq& \sum_{i \geq 0} K (2^{i+1}R)^p e^{-2^i R} \,\notag\\
& \stackrel{(c)} \leq& \sum_{i \geq 0} e^{-2^{i-1}R} \,\notag\\
& =& \sum_{i \geq 0} \varepsilon^{C d 2^{i-1}} \,\notag\\
& \stackrel{(d)} \leq& \sum_{i \geq 0} \varepsilon/2^{i+1} = \varepsilon,
\end{eqnarray}
where $(a)$ follows from the Jensen's inequality, $(b)$ follows from \prettyref{assump:finiteassump}, $(c)$ follows from the fact that $K(2x)^p e^{-x} \leq e^{-x/2}$ for $x$ sufficiently large, and $(d)$ follows from choosing $C$ sufficiently large.
Similarly for $\norm{\nabla^2 L(A)- \nabla^2 L_T(A)}_2$.
\end{proof}

We are now ready to prove \prettyref{thm:finitesample}.
\begin{proof}
Let $A$ be such that norms of all the rows are less than $2$. Then we have from \prettyref{lmm:truncclose} that
\begin{eqnarray}
\norm{\nabla L(A)- \nabla L_T(A)}_2 \leq \varepsilon/4,\\
\norm{\nabla^2 L(A)- \nabla^2 L_T(A)}_2 \leq \tau_0/4.
\end{eqnarray}
Notice that the gradient and Hessian of $l(x,y,A)\mathbbm{1}_E$ are bounded $\tau=\mathrm{poly}(d,1/\varepsilon)$ for some fixed polynomial $\mathrm{poly}$. Hence using the uniform convergence of the sample gradients/Hessians to their population counterparts~\cite[Theorem E.3]{ge2017learning}, we have that
\begin{eqnarray}
\norm{\nabla L_T(A)- \nabla \hat{L}_T(A)}_2 \leq \varepsilon/6,\\
\norm{\nabla^2 L_T(A)- \nabla^2 \hat{L}_T(A)}_2 \leq \tau_0/6,
\end{eqnarray}
whenever $N \geq \mathrm{poly}(d,1/\varepsilon)$, with high probability. Moreover, from standard concentration inequalities (such as multivariate Chebyshev) it follows that
\begin{eqnarray}
&&\norm{\nabla \hat{L}(A)- \nabla \hat{L}_T(A)- (\nabla L(A)- \nabla L_T(A) )   }_2 \,\notag\\
&\leq& \varepsilon/6,\\
&&\norm{\nabla^2 \hat{L}(A)- \nabla^2 \hat{L}_T(A)- (\nabla^2 L(A)- \nabla^2 L_T(A) )}_2 \,\notag\\
&\leq& \tau_0/6,
\end{eqnarray}
with high probability, whenever $N \geq \mathrm{poly}(d,1/\varepsilon)$. Hence, we obtain that
\begin{align}
\norm{\nabla L(A)- \nabla \hat{L}(A)}_2 \leq \varepsilon/2,\\
\norm{\nabla^2 L(A)- \nabla^2 \hat{L}(A)}_2 \leq \tau_0/2.
\label{eq:final}
\end{align}
If $A$ is such that there exists a row $A_i$ with $\norm{A_i} \geq 2$, we have from~\cite[Lemma E.5]{ge2017learning} that $\inner{\nabla \hat{L}(A),A_i} \geq c \lambda \norm{A_i}^4$ for a small constant $c$ and thus $A$ cannot be a local minimum for $\hat{L}$. Hence all local minima of $\hat{L}$ must have $\norm{A_i} \leq 2$ and thus in view of \prettyref{eq:final} it follows that it also a $\varepsilon$-approximate local minima of $L$, or more concretely,
\begin{eqnarray}
\norm{\nabla L(A)} \leq \varepsilon, \quad \nabla^2 L(A) \succcurlyeq -\tau_0 I_d.
\end{eqnarray}
\end{proof}

\subsection{Landscape design for $k<d$}
\label{app:generalk}
In the setting where $k=d$ and the regressors $a_1^\ast,\ldots,a_d^\ast$ are linearly independent, our loss functions $L_4(\cdot)$ can modified in a straightforward manner to arrive at the loss function $F(\cdot)$ defined in Appendix C.2 of \cite{ge2017learning}. Hence we have the same landscape properties as that of Theorem B.1 of \cite{ge2017learning}. The proof is exactly similar to that of our \prettyref{thm:landscape}.

In a more geneal scenario where $k<d$ and the regressors $a_1^\ast,\ldots,a_d^\ast$ are linearly independent, it turns out that our loss function $L_4(\cdot)$ can also be transformed to obtain the loss $\calF(\cdot)$ in Appendix C.3 of \cite{ge2017learning} to arrive at Theorem C.1 of \cite{ge2017learning} in our setting. The proof is again similar.

\end{document}